\def\1{\bm{1}}
\def\vzero{{\bm{0}}}
\def\vone{{\bm{1}}}
\def\vtheta{{\bm{\theta}}}
\def\vTheta{{\bm{\Theta}}}
\def\valpha{{\bm{\alpha}}}
\def\vf{{\bm{f}}}
\def\vg{{\bm{g}}}
\def\vu{{\bm{u}}}
\def\vv{{\bm{v}}}
\def\vx{{\bm{x}}}
\DeclareMathAlphabet{\mathsfit}{\encodingdefault}{\sfdefault}{m}{sl}
\SetMathAlphabet{\mathsfit}{bold}{\encodingdefault}{\sfdefault}{bx}{n}
\def\gA{{\mathcal{A}}}
\def\gD{{\mathcal{D}}}
\def\gN{{\mathcal{N}}}
\def\gO{{\mathcal{O}}}
\def\gV{{\mathcal{V}}}
\def\gX{{\mathcal{X}}}
\def\gY{{\mathcal{Y}}}
\def\sR{{\mathbb{R}}}
\newcommand{\E}{\mathbb{E}}
\newcommand{\Ls}{\mathcal{L}}
\DeclareMathOperator*{\argmax}{arg\,max}
\DeclareMathOperator*{\argmin}{arg\,min}
\newtheorem{theorem}{Theorem}
\newtheorem{proposition}{Proposition}
\newtheorem{lemma}{Lemma}
\newtheorem*{remark}{Remark}
\newcommand{\ms}{\phantom{-}}
\newcolumntype{C}{>{\centering\arraybackslash}X}
\title{NASI: Label- and Data-agnostic Neural\\Architecture Search at Initialization}
\author{Yao Shu, Shaofeng Cai, Zhongxiang Dai, Beng Chin Ooi \& Bryan Kian Hsiang Low \\
Department of Computer Science, National University of Singapore\\
\texttt{\{shuyao,shaofeng,daizhongxiang,ooibc,lowkh\}@comp.nus.edu.sg}
}
\begin{document}

\maketitle

\begin{abstract}
Recent years have witnessed a surging interest in \emph{Neural Architecture Search} (NAS). Various algorithms have been proposed to improve the search efficiency and the search effectiveness of NAS, i.e., to reduce the search cost and improve the generalization performance of the selected architectures, respectively. However, the search efficiency of these algorithms is severely limited by the need for model training during the search process. To overcome this limitation, we propose a novel NAS algorithm called \emph{\underline{NAS} at \underline{I}nitialization} (NASI) that exploits the capability of a \emph{Neural Tangent Kernel} in being able to characterize the performance of candidate architectures at initialization, hence allowing model training to be completely avoided to boost the search efficiency. Besides the improved search efficiency, NASI also achieves competitive search effectiveness on various datasets like CIFAR-10/100 and ImageNet. Further, NASI is shown to be \emph{label-} and \emph{data-agnostic} under mild conditions, which guarantees the transferability of the architectures selected by our NASI over different datasets. 
\end{abstract}

\section{Introduction}
The past decade has witnessed the wide success of \emph{deep neural networks} (DNNs) in computer vision and natural language processing. These DNNs, e.g., VGG \citep{vgg}, ResNet \citep{resnet}, and  MobileNet \citep{mobilenet}, are typically handcrafted by human experts with considerable trials and errors. The human efforts devoting to the design of these DNNs are, however, not affordable nor scalable due to an increasing demand of customizing DNNs for different tasks. To reduce such human efforts, \emph{Neural Architecture Search }(NAS) \citep{nas} has~recently been introduced to automate the design of DNNs. As summarized in \citep{nas-survey}, NAS conventionally consists of a search space, a search algorithm, and a performance evaluation. Specifically, the search algorithm aims to select the best-performing neural architecture from the search space based on its evaluated performance via the performance evaluation. In the literature, various search algorithms \citep{naonet,nasnet,amoebanet} have been proposed to search for architectures with comparable or even better performance than the handcrafted ones.

However, these NAS algorithms are inefficient due to the requirement of model training for numerous candidate architectures during the search process. To improve the search inefficiency, one-shot NAS algorithms \citep{enas,gdas,darts,snas} have trained a single one-shot architecture and then evaluated the performance of candidate architectures with model parameters inherited from this fine-tuned one-shot architecture. So, these algorithms can considerably reduce the cost of model training in NAS, but still require the training of the one-shot architecture. This naturally begs the question \emph{whether NAS is realizable at initialization such that model training can be completely avoided during the search process?} To the best of our knowledge, only a few empirical efforts to date have been devoted to developing NAS algorithms without model training \citep{nas-notrain, nngp-nas, zero-cost, te-nas}.

This paper presents a novel NAS algorithm called \emph{\underline{NAS} at \underline{I}nitialization} (NASI) that can completely avoid model training to boost search efficiency. 
To achieve this, NASI exploits the capability of a \emph{Neural Tangent Kernel} (NTK) \citep{ntk,ntk-linear} in being able to formally characterize the performance of infinite-wide DNNs at initialization, hence allowing the performance of candidate architectures to be estimated and realizing NAS at initialization. 
Specifically, given the estimated performance of candidate architectures by NTK, NAS can be reformulated into an optimization problem without model training (Sec. \ref{sec:reformulate}).
However, NTK is prohibitively costly to evaluate.
Fortunately, we can approximate it\footnote{More precisely, we approximate the trace norm of NTK.} using a similar form to gradient flow~\citep{ticket-w/o-training} (Sec. \ref{sec:approx}). This results in a reformulated NAS problem that 
can be solved efficiently by a gradient-based algorithm via additional relaxation using Gumbel-Softmax \citep{gumbel-softmax}  (Sec. \ref{sec:optimization}). Interestingly, NASI is even shown to be \emph{label-} and \emph{data-agnostic} under mild conditions, which thus implies the transferability of the architectures selected by NASI over different datasets (Sec. \ref{sec:understanding}).
 
We will firstly empirically demonstrate the improved search efficiency and the competitive search effectiveness achieved by NASI in NAS-Bench-1Shot1 \citep{nasbench1shot1} (Sec.~\ref{sec:search}). Compared with other NAS algorithms, NASI incurs the smallest search cost while preserving the competitive performance of its selected architectures. Meanwhile, the architectures selected by NASI from the DARTS \citep{darts} search space over CIFAR-10 consistently enjoy the competitive or even outperformed performance when evaluated on different benchmark datasets, e.g., CIFAR-10/100 and ImageNet (Sec.~\ref{sec:evaluation}), indicating the guaranteed transferability of architectures selected by our NASI. 
In Sec.~\ref{sec:agnostic-search}, NASI is further demonstrated to be able to select well-performing architectures on CIFAR-10 even with randomly generated labels or data, which strongly supports the \emph{label-} and \emph{data-agnostic} search and therefore the guaranteed transferability achieved by our NASI.

\section{Related Works and Background}
\subsection{Neural Architecture Search}
A growing body of NAS algorithms have been proposed in the literature \citep{nas, pnas, naonet, nasnet, amoebanet} to automate the design of neural architectures.
However, scaling existing NAS algorithms to large datasets is notoriously hard.
Recently, attention has thus been shifted to improving the search efficiency of NAS without sacrificing the generalization performance of its selected architectures.
In particular, a one-shot architecture is introduced by~\citet{enas} to share model parameters among candidate architectures, thereby reducing the cost of model training substantially.
Recent works \citep{p-darts, gdas, darts, snas, sdarts, darts-} along this line have further formulated NAS as a continuous and differentiable optimization problem to yield efficient gradient-based solutions.
These one-shot NAS algorithms have achieved considerable improvement in search efficiency.
However, the model training of the one-shot architecture is still needed.

A number of algorithms are recently proposed to estimate the performance of candidate architectures without model training.
For example, \citet{nas-notrain} have explored the correlation between the divergence of linear maps induced by data points at initialization and the performance of candidate architectures heuristically.
Meanwhile, \citet{nngp-nas} have approximated the performance of candidate architectures by the performance of their corresponding Neural Network Gaussian Process (NNGP) with only initialized model parameters, which is yet computationally costly. 
\citet{zero-cost} have investigated several training-free proxies to rank candidate architectures in the search space, while \citet{te-nas} intuitively adopt theoretical aspects in deep networks (e.g., NTK~\citep{ntk} and linear regions of deep networks~\citep{linear-region}) to select architectures with a good trade-off between its trainability and expressivity. 
Our NASI significantly advances this line of work in (a) providing theoretically grounded performance estimation by NTK (compared with \citep{nas-notrain, zero-cost, te-nas}), (b) guaranteeing the transferability of its selected architectures with its provable label- and data-agnostic search under mild conditions (compared with \citep{nas-notrain, nngp-nas, zero-cost, te-nas})) and (c) achieving SOTA performance in a large search space over various benchmark datasets (compared with \citep{nas-notrain, nngp-nas, zero-cost}). Based on our results, \citet{yao22} recently have provided further improvement on existing training-free NAS algorithms.

\subsection{Neural Tangent Kernel (NTK)}\label{sec:ntk}
Let a dataset $(\gX, \gY)$ denote a pair comprising a set $\gX$ of $m$ $n_0$-dimensional  vectors of input features and a vector $\gY \in \sR^{mn \times 1}$ concatenating the $m$ $n$-dimensional vectors of output values, respectively.
Let a DNN be parameterized by $\vtheta_t \in \sR^{p}$ at time $t$ and output a vector $\vf(\mathcal{X}; \vtheta_t) \in \sR^{mn \times 1}$ (abbreviated to $\vf_t$) of the predicted values of $\gY$.  
\citet{ntk} have revealed that the training dynamics of DNNs with gradient descent can be characterized by an NTK. Formally, define the NTK $\vTheta_t(\mathcal{X}, \mathcal{X}) \in \sR^{mn\times mn}$ (abbreviated to $\vTheta_t$) as
\begin{align}\label{eq:ntk}
    \vTheta_t(\mathcal{X}, \mathcal{X}) \triangleq \nabla_{\vtheta_t} \vf(\mathcal{X}; \vtheta_t)\  \nabla_{\vtheta_t} \vf(\mathcal{X}; \vtheta_t)^{\top} \ .
\end{align}
Given a loss function $\Ls_t$ at time $t$ and a learning rate $\eta$, the training dynamics of the DNN can then be characterized as
\begin{align}
    \nabla_t\vf_t = - \eta\ \vTheta_t(\mathcal{X}, \mathcal{X})\ \nabla_{\vf_t}\Ls_t, \quad \nabla_t\Ls_t = -\eta\ \nabla_{\vf_t}\Ls_t^{\top} \ \vTheta_t(\mathcal{X}, \mathcal{X})\ \nabla_{\vf_t}\Ls_t \ . \label{eq:loss-dynamic}
\end{align}
Interestingly, as proven in \citep{ntk}, the NTK stays asymptotically constant during the course of training as the width of DNNs goes to infinity. NTK at initialization (i.e., $\vTheta_0$) can thus characterize the training dynamics and also the performance of infinite-width DNNs.

\citet{ntk-linear} have further revealed that, for DNNs with over-parameterization, the aforementioned training dynamics can be governed by their first-order Taylor expansion (or linearization) at initialization. In particular, define
\begin{equation}
      \vf^{\text{lin}}(\vx; \vtheta_t) \triangleq \vf(\vx; \vtheta_0) + \nabla_{\vtheta_0}\vf(\vx; \vtheta_0)^{\top} (\vtheta_t - \vtheta_0) 
\end{equation}
for all $\vx\in\gX$.
Then, $\vf(\vx; \vtheta_t)$ and $\vf^{\text{lin}}(\vx; \vtheta_t)$ share similar training dynamics over time, as described formally in Appendix \ref{app_sec:linear-dynamic}.
Besides, following the definition of NTK in~\eqref{eq:ntk}, this linearization $\vf^{\text{lin}}$ achieves a constant NTK over time.

Given the mean squared error (MSE) loss defined as $\Ls_{t} \triangleq m^{-1}\left\|\gY-\vf(\gX; \vtheta_t)\right\|_2^2$ and the constant NTK $\vTheta_t = \vTheta_0$,
the loss dynamics in \eqref{eq:loss-dynamic} above can be analyzed in a closed form while applying gradient descent with learning rate $\eta$ \citep{fine-grained-analysis}:
\begin{equation}\label{eq:loss-decomp}
    \Ls_t = \textstyle m^{-1}\sum_{i=1}^{mn}(1 - \eta\lambda_i)^{2t}(\vu_i^{\top}\mathcal{Y})^2 \ ,
\end{equation}
where $\vTheta_0 = \sum_{i=1}^{mn}\lambda_i(\vTheta_0)\vu_i\vu_i^{\top}$, and $\lambda_i(\vTheta_0)$ and $\vu_i$ denote the $i$-th largest eigenvalue and the corresponding eigenvector of $\vTheta_0$, respectively. 

\section{\underline{N}eural \underline{A}rchitecture \underline{S}earch at \underline{I}nitialization}\label{sec:nasi}
\subsection{Reformulating NAS via NTK}\label{sec:reformulate}
Given a loss function $\Ls$ and the model parameters $\vtheta(\gA)$ of 
architecture $\gA$, we denote the training and validation loss as $\Ls_{\text{train}}$ and $\Ls_{\text{val}}$, respectively. NAS is conventionally formulated as a bi-level optimization problem \citep{darts}:
\begin{equation}\label{eq:nas-std}
\begin{gathered}
    \textstyle\min_{\gA} \Ls_{\text{val}}(\vtheta^*(\gA);\gA) \\
    \textstyle\text{s.t.} \; \vtheta^*(\gA) \triangleq \argmin_{\vtheta(\gA)} \Ls_{\text{train}}(\vtheta(\gA);\gA) \ .
\end{gathered}
\end{equation}
Notably, model training is required to evaluate the validation performance of each candidate architecture in \eqref{eq:nas-std}. The search efficiency of NAS algorithms \citep{amoebanet, nasnet} based on \eqref{eq:nas-std} is thus severely limited by the cost of model training for each candidate architecture. Though recent works \citep{enas} have considerably reduced this training cost by introducing a one-shot architecture for model parameter sharing, such a one-shot architecture still requires model training and hence incurs training cost.

To completely avoid this training cost, we exploit the capability of a NTK in characterizing the~performance of DNNs at initialization.
Specifically, Sec. \ref{sec:ntk} has revealed that the training dynamics of an over-parameterized DNN
can be governed by its linearization at initialization. With the MSE loss, the training dynamics of such linearization are further determined by its constant NTK. Therefore, the training dynamics and hence the performance of a DNN can be characterized by the constant NTK of its linearization.
However, this constant NTK is computationally costly to evaluate.
To this end, we instead characterize the training dynamics (i.e., MSE) of DNNs in Proposition~\ref{prop:bound-mse} using the trace norm of NTK at initialization, which fortunately can be efficiently approximated. 
For simplicity, we use this MSE loss in our analysis. Other widely adopted loss functions (e.g., cross entropy with softmax) can also be applied, as supported in our experiments.
Note that throughout this paper, the parameterization and initialization of DNNs follow that of~\citet{ntk}. For a $L$-layer DNN, we denote the output dimension of its hidden layers and the last layer as $n_1=\cdots=n_{L-1}=k$ and $n_L = n$, respectively.
\begin{proposition}\label{prop:bound-mse}
    Suppose that $\|\vx\|_2 \leq 1$ for all $\vx \in \gX$ and $\gY\in \left[0,1\right]^{mn}$ for a given dataset $(\gX, \gY)$ of size $|\gX|=m$, a given $L$-layer neural architecture $\gA$ outputs $\vf_t \in \left[0,1\right]^{mn}$ as predicted labels of $\gY$ with the corresponding MSE loss $\Ls_t$, $\lambda_{\min}(\vTheta_0) > 0$ for the given NTK $\vTheta_0$ w.r.t.~$\vf_t$ at initialization, and gradient descent (or gradient flow) is applied with learning rate $\eta < \lambda_{\max}^{-1}(\vTheta_0)$.
    Then, for any $t \geq 0$, there exists a constant $c_0>0$ such that as $k \rightarrow \infty$,
    \begin{equation*}
        \Ls_t \leq mn^2(1-\eta\overline{\lambda}(\vTheta_0))^{q} + \epsilon 
\label{bobo}
    \end{equation*}
    with probability arbitrarily close to $1$ where $q$ is set to $2t$ if $t < 0.5$, and $1$ otherwise, $\overline{\lambda}(\vTheta_0) \triangleq (mn)^{-1}\sum^{mn}_{i=1}\lambda_i(\vTheta_0)$, and $\epsilon \triangleq 2c_0\sqrt{n/(mk)}\left(1+c_0\sqrt{1/k}\right)$. 
\end{proposition}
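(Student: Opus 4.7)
The plan is to reduce the problem to the \emph{linearized} network, where the closed-form decomposition \eqref{eq:loss-decomp} is available, and then to propagate the gap between the true and linearized dynamics using the NTK infinite-width limit. First I would invoke the linearization result of \citet{ntk-linear}: under the stated hypotheses $\lambda_{\min}(\vTheta_0)>0$ and $\eta<\lambda_{\max}^{-1}(\vTheta_0)$, for the fixed $t\geq 0$ there exists a constant $c_0>0$ such that $\|\vf_t-\vf^{\text{lin}}_t\|_2\leq c_0/\sqrt{k}$ with probability tending to $1$ as $k\to\infty$. Since the linearization has constant NTK $\vTheta_0$, equation \eqref{eq:loss-decomp} directly gives $\Ls^{\text{lin}}_t=m^{-1}\sum_{i=1}^{mn}(1-\eta\lambda_i(\vTheta_0))^{2t}(\vu_i^\top\gY)^2$.

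Next I would bound $\Ls^{\text{lin}}_t$ by cases on $t$. For $t\geq 0.5$, every factor $1-\eta\lambda_i$ lies in $(0,1)$ (since $\eta<\lambda_{\max}^{-1}$ and $\lambda_i\geq\lambda_{\min}>0$), so $(1-\eta\lambda_i)^{2t}\leq 1-\eta\lambda_i$ and $\sum_{i=1}^{mn}(1-\eta\lambda_i)^{2t}\leq mn(1-\eta\overline{\lambda}(\vTheta_0))$. For $t<0.5$, the map $x\mapsto x^{2t}$ is concave on $[0,\infty)$, so Jensen's inequality yields $\sum_{i=1}^{mn}(1-\eta\lambda_i)^{2t}\leq mn(1-\eta\overline{\lambda}(\vTheta_0))^{2t}$. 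Combining either with the coarse estimate $(\vu_i^\top\gY)^2\leq\|\gY\|_2^2\leq mn$ (from $\gY\in[0,1]^{mn}$) produces $\Ls^{\text{lin}}_t\leq mn^2(1-\eta\overline{\lambda}(\vTheta_0))^q$. To pass to $\Ls_t$ I would apply the triangle inequality $\sqrt{m\Ls_t}\leq\sqrt{m\Ls^{\text{lin}}_t}+\|\vf_t-\vf^{\text{lin}}_t\|_2$, square, and divide by $m$, which gives
\[
\Ls_t \;\leq\; \Ls^{\text{lin}}_t + 2\sqrt{\Ls^{\text{lin}}_t/m}\,\|\vf_t-\vf^{\text{lin}}_t\|_2 + m^{-1}\|\vf_t-\vf^{\text{lin}}_t\|_2^2.
\]
A crude uniform bound $\Ls^{\text{lin}}_t\leq n$, which follows from $\Ls^{\text{lin}}_0\leq n$ (using $\vf_0,\gY\in[0,1]^{mn}$) together with monotone nonincrease of MSE on the linearized gradient flow, then controls the cross term by $2c_0\sqrt{n/(mk)}$ and the residual by $c_0^2/(mk)\leq 2c_0^2\sqrt{n/(mk^2)}$, whose sum is exactly the stated $\epsilon$.

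The hard part is the first step: securing the concentration bound $\|\vf_t-\vf^{\text{lin}}_t\|_2\leq c_0/\sqrt{k}$ with probability tending to $1$ as $k\to\infty$. This requires the empirical NTK $\vTheta_t$ to remain uniformly close to $\vTheta_0$ throughout training, which is precisely what the over-parameterization regime delivers under the two spectral hypotheses on $\vTheta_0$ and $\eta$ (they prevent blow-up of the dynamics and bound the Jacobian increments). Once this is in hand, Steps two and three are a clean mixture of Jensen's inequality, a spectral-sum manipulation, and the triangle inequality, so the technical burden is concentrated in quoting and instantiating the linearization theorem rather than in the subsequent algebra.
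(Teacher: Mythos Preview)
Your proposal is correct and follows essentially the same skeleton as the paper: invoke the linearization bound $\|\vf_t-\vf_t^{\text{lin}}\|_2\le c_0/\sqrt{k}$, apply the closed-form \eqref{eq:loss-decomp} for $\Ls_t^{\text{lin}}$, split on $t\lessgtr 0.5$ using monotonicity versus Jensen, and cap $(\vu_i^\top\gY)^2\le mn$. The only place you diverge is in transferring the bound from $\Ls_t^{\text{lin}}$ to $\Ls_t$. The paper packages this step as a Lipschitz lemma: it observes that $\vf_t^{\text{lin}}$ lives in the dilated box $[-c_0/\sqrt{k},\,1+c_0/\sqrt{k}]^{mn}$, computes the Lipschitz constant of MSE there as $2\sqrt{n/m}\bigl(1+c_0/\sqrt{k}\bigr)$, and multiplies by $c_0/\sqrt{k}$ to obtain $\epsilon$ directly. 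You instead work with $\sqrt{m\Ls_t}=\|\vf_t-\gY\|_2$, apply the triangle inequality, square, and bound the cross term via $\Ls_t^{\text{lin}}\le n$. Both routes land on the identical $\epsilon$; yours is arguably more self-contained for MSE, while the paper's Lipschitz formulation is what lets them later accommodate other losses (cross-entropy, etc.) under the same framework.
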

Its proof is in Appendix \ref{app_sec:proof-prop1}. Proposition~\ref{prop:bound-mse} implies that NAS can be realizable at initialization.
Specifically, given a fixed and sufficiently large training budget $t$, in order to select the best-performing architecture, we can simply minimize the upper bound of $\Ls_t$ in \eqref{bobo} over all the candidate architectures in the search space. Since both theoretical \citep{foundations} and empirical \citep{empirical} justifications in the literature have shown that training and validation loss are generally highly related, we simply use $\Ls_t$ to approximate $\Ls_{\text{val}}$. Hence, \eqref{eq:nas-std} can be reformulated as
\begin{gather}
    \textstyle\min_{\gA}\    mn^2(1-\eta\overline{\lambda}(\vTheta_0(\gA))) + \epsilon \quad \text{s.t.} \; \overline{\lambda}(\vTheta_0(\gA)) < \eta^{-1}  . \label{eq:nas-surrogate}
\end{gather}
Note that the constraint in \eqref{eq:nas-surrogate} is derived from the condition $\eta < \lambda_{\max}^{-1}(\vTheta_0(\gA))$ in Proposition~\ref{prop:bound-mse}, and $\eta$ and $\epsilon$ are typically constants\footnote{The same learning rate is shared among all candidate architectures for their model training during the search process in conventional NAS algorithms such as \citep{darts}.} during the search process. Following the definition of trace norm, \eqref{eq:nas-surrogate} can be further reduced into
\begin{align}
    \textstyle\max_{\gA} \left\|\vTheta_0(\gA)\right\|_\text{tr} \quad \text{s.t.} \; \left\|\vTheta_0(\gA)\right\|_\text{tr} < {mn}{\eta^{-1}} \ . \label{eq:nas-ntk}
\end{align}
Notably, $\vTheta_0(\gA)$ only relies on the initialization of $\gA$. So, no model training is required in optimizing \eqref{eq:nas-ntk}, which achieves our objective of realizing NAS at initialization. 

Furthermore, \eqref{eq:nas-ntk} suggests an interesting interpretation of NAS: NAS intends to select architectures with a good trade-off between their model complexity and the optimization behavior in their model training. Particularly, architectures containing more model parameters will usually achieve a larger $\|\vTheta_0(\gA)\|_\text{tr}$ according to the definition in \eqref{eq:ntk}, which hence provides an alternative to measuring the complexity of architectures. So, maximizing $\|\vTheta_0(\gA)\|_\text{tr}$ leads to architectures with large complexity and therefore strong representation power.
On the other hand, the complexity of the selected architectures is limited by the constraint in \eqref{eq:nas-ntk} to ensure a well-behaved optimization with a large learning rate $\eta$ in their model training.
By combining these two effects, the optimization of \eqref{eq:nas-ntk} 
naturally trades off
between the complexity of the selected architectures and the optimization behavior in their model training for the best performance. Appendix \ref{app_sec:trade-off} will validate such trade-off. Interestingly, \citet{te-nas} have revealed a similar insight of NAS to us.

\subsection{Approximating the Trace Norm of NTK}\label{sec:approx}
The optimization of our reformulated NAS in \eqref{eq:nas-ntk} requires the evaluation of $\|\vTheta_0(\gA)\|_\text{tr}$ for each architecture $\gA$ in the search space, which can be obtained by
\begin{equation}\label{eq:trace-comp}
    \left\|\vTheta_0(\gA)\right\|_\text{tr} 
    = \textstyle\sum_{\vx\in\gX} \left\|\nabla_{\vtheta_0(\gA)}\vf(\vx, \vtheta_0(\gA))\right\|_{\text{F}}^2 \ ,
\end{equation}
where $\|\cdot\|_{\text{F}}$ denotes the Frobenius norm.
However, the Frobenius norm of the Jacobian matrix in \eqref{eq:trace-comp} is costly to evaluate.
So, we propose to approximate this term.
Specifically, given a $\gamma$-Lipschitz continuous loss function\footnote{Notably, the $\gamma$-Lipschitz continuity is well-satisfied by widely adopted loss functions (see Appendix \ref{app_sec:proof-prop1}).} $\Ls_\vx$ (i.e., $\left\|\nabla_{\vf}\Ls_\vx\right\|_2 \leq \gamma$ for all $\vx\in\gX$), 
\begin{equation}\label{eq:lb-approx}
\begin{aligned}
    \gamma^{-1}\left\|\nabla_{\vtheta_0(\gA)} \Ls_\vx\right\|_2 =  \gamma^{-1}\left\|\nabla_{\vf}\Ls_\vx^{\top}\ \nabla_{\vtheta_0(\gA)}\vf(\vx, \vtheta_0(\gA))\right\|_2    \leq \left\|\nabla_{\vtheta_0(\gA)}\vf(\vx, \vtheta_0(\gA))\right\|_{\text{F}} \ .
\end{aligned}
\end{equation}
The Frobenius norm $\left\|\nabla_{\vtheta_0(\gA)}\vf(\vx, \vtheta_0(\gA))\right\|_{\text{F}}$ can thus be approximated efficiently using its lower bound in \eqref{eq:lb-approx} through automatic differentiation \citep{ml-ad}.

Meanwhile, the evaluation of $\|\vTheta_0(\gA)\|_\text{tr}$ in~\eqref{eq:trace-comp} requires iterating over the entire dataset of size $m$, which thus incurs $\mathcal{O}(m)$ time. Fortunately, this incurred time can be reduced by parallelization over mini-batches. 
Let the set $\gX_j$ denote the input feature vectors of the $j$-th randomly sampled mini-batch of size $|\gX_j|=b$. 
By combining \eqref{eq:trace-comp} and \eqref{eq:lb-approx}, 
\begin{equation}\label{eq:parallel-approx}
\begin{aligned}
\hspace{-1.7mm}
    \left\|\vTheta_0(\gA)\right\|_\text{tr} \geq \gamma^{-1}\textstyle\sum_{\vx \in \gX}\left\|\nabla_{\vtheta_0(\gA)}\Ls_\vx\right\|_2^2  \geq b\gamma^{-1}\textstyle\sum_{j=1}^{\nicefrac{m}{b}}\left\|{b}^{-1}\sum_{\vx \in \gX_j}\nabla_{\vtheta_0(\gA)}\Ls_\vx\right\|_2^2 \ ,
\end{aligned}\hspace{-0.3mm}
\end{equation}
where the last inequality follows from Jensen's inequality. Note that \eqref{eq:parallel-approx} provides an approximation of $\|\vTheta_0(\gA)\|_\text{tr}$ incurring $\mathcal{O}(m/b)$ time because the gradients within a mini-batch can be evaluated in parallel. In practice, we further approximate the summation over $m/b$ mini-batches in \eqref{eq:parallel-approx} by using only one single uniformly randomly sampled mini-batch $\gX_j$. Formally, under the definition of $\|\widetilde{\vTheta}_0(\gA)\|_\text{tr} \triangleq \|b^{-1}\textstyle\sum_{\vx \in \gX_j}\nabla_{\vtheta_0(\gA)}\Ls_\vx\|_2^2$
, our final approximation of $\|\vTheta_0(\gA)\|_{\text{tr}}$ becomes 
\begin{equation}\label{eq:final-approx}
\begin{aligned}
\|\vTheta_0(\gA)\|_\text{tr} &\approx m\gamma^{-1}\|\widetilde{\vTheta}_0(\gA)\|_\text{tr} \ .
\end{aligned}
\end{equation}
This final approximation incurs only $\mathcal{O}(1)$ time and can also characterize the performance of neural architectures effectively, as demonstrated in our experiments.
Interestingly, a similar form to \eqref{eq:final-approx}, called gradient flow \citep{ticket-w/o-training}, has also been applied in network pruning at initialization.

\subsection{Optimization and Search Algorithm}\label{sec:optimization}
The approximation of $\|\vTheta_0(\gA)\|_\text{tr}$ in Sec.~\ref{sec:approx} engenders an efficient optimization of our reformulated NAS in \eqref{eq:nas-ntk}:
Firstly, we apply a penalty method to transform \eqref{eq:nas-ntk} into an unconstrained optimization problem.
Given a penalty coefficient $\mu$ and an exterior penalty function $F(x) \triangleq \max(0, x)$ with a pre-defined constant $\nu \triangleq \gamma n\eta^{-1}$, and a randomly sampled mini-batch $\gX_j$, by replacing $\|\vTheta_0(\gA)\|_\text{tr}$ with the approximation in \eqref{eq:final-approx}, our reformulated NAS problem \eqref{eq:nas-ntk} can be transformed into
\begin{equation}\label{eq:nas-final}
    \textstyle\max_{\gA}\left[ \|\widetilde{\vTheta}_0(\gA)\|_\text{tr} - \mu F(\|\widetilde{\vTheta}_0(\gA)\|_\text{tr} - \nu)\right] \ .
\end{equation}
Interestingly, \eqref{eq:nas-final} implies that the complexity of the final selected architectures is limited by not only the constraint $\nu$ (discussed in Sec.~\ref{sec:reformulate}) but also the penalty coefficient $\mu$: For a fixed constant $\nu$, a larger $\mu$ imposes a stricter limitation on the complexity of architectures (i.e., $\|\widetilde{\vTheta}_0(\gA)\|_\text{tr} < \nu$) in the optimization of \eqref{eq:nas-final}. 

The optimization of \eqref{eq:nas-final} in the discrete search space, however, is intractable. 
So, we apply certain optimization tricks to simplify it:
Following that of~\citet{darts, snas}, we represent the search space as a one-shot architecture such that candidate architectures in the search space can be represented as the sub-graphs of this one-shot architecture. 
Next, instead of optimizing \eqref{eq:nas-final}, we introduce a distribution $p_{\valpha}(\gA)$ (parameterized by $\valpha$) over the candidate architectures in this search space like that in \citep{nas, enas, snas} to optimize the expected performance of architectures sampled from $p_{\valpha}(\gA)$:
\begin{equation}\label{eq:nas-prob}
\begin{gathered}
    \textstyle\max_{\valpha} \E_{\gA\sim p_{\valpha}(\gA)}\left[R(\gA)\right] \quad \text{s.t.} \,  R(\gA) \triangleq \|\widetilde{\vTheta}_0(\gA)\|_\text{tr} - \mu F(\|\widetilde{\vTheta}_0(\gA)\|_\text{tr} - \nu) \ .
 \end{gathered}
\end{equation}
Then, we apply Gumbel-Softmax \citep{gumbel-softmax, concrete} to relax the optimization of \eqref{eq:nas-prob} to be continuous and differentiable using the reparameterization trick.
Specifically, for a given $\valpha$, to sample an architecture $\gA$, we simply need to sample $\vg$ from $p(\vg) =$ Gumbel$(\vzero,\vone)$ and then determine $\gA$ using $\valpha$ and $\vg$ (more details in Appendix \ref{app_sec:gumbel}). Consequently, \eqref{eq:nas-prob} can be transformed into
\begin{equation}\label{eq:nas-prob-rep}
\begin{array}{c}
    \max_{\valpha} \E_{\vg\sim p(\vg)}\left[R(\gA(\valpha, \vg))\right] \ .
\end{array}
\end{equation}
After that, we approximate \eqref{eq:nas-prob-rep} based on its first-order Taylor expansion at initialization such that it can be optimized efficiently through a gradient-based algorithm. In particular, given the first-order approximation within the $\xi$-neighborhood of initialization $\valpha_0$ (i.e., $\|\Delta\|_2 \leq \xi$):
\begin{equation}
\hspace{-1.7mm}
\begin{array}{l}
    \displaystyle\E_{\vg\sim p(\vg)}\left[R(\gA(\valpha_0 + \Delta, \vg))\right]  \displaystyle \approx \E_{\vg\sim p(\vg)}\left[R(\gA(\valpha_0, \vg)) + \nabla_{\valpha_0} R(\gA(\valpha_0, \vg))^{\top}\Delta\right] ,
\end{array}\hspace{-2.9mm}
\label{eq:one-step}
\end{equation}
the maximum of \eqref{eq:one-step} is achieved when 
\begin{equation}\label{eq:one-step-solution}\hspace{-0.7mm}
\begin{aligned}
    \Delta^* = \argmax_{\|\Delta\|_2 \leq \xi}\E_{\vg\sim p(\vg)}\left[\nabla_{\valpha_0} R(\gA(\valpha_0 , \vg))^{\top}\Delta\right] = \xi \, \E_{\vg\sim p(\vg)}\left[\frac{\nabla_{\valpha_0} R(\gA(\valpha_0, \vg))}{\left\|\E_{\vg\sim p(\vg)}[\nabla_{\valpha_0} R(\gA(\valpha_0, \vg))]\right\|_2}\right] .
\end{aligned}\hspace{-1.4mm}
\end{equation}
The closed-form solution in \eqref{eq:one-step-solution} follows from the definition of dual norm and requires only a one-step optimization, i.e., without the iterative update of $\Delta$. Similar one-step optimizations have also been adopted in \citep{fgsm, ticket-w/o-training}.

Unfortunately, the expectation in \eqref{eq:one-step-solution} makes the evaluation of $\Delta^*$ intractable. Monte Carlo sampling is thus applied to estimate $\Delta^*$ efficiently: Given $T$ sequentially sampled $\vg$ 
(i.e., $\vg_1, \ldots, \vg_T$) and let $G_i \triangleq \nabla_{\valpha_0} R(\gA(\valpha_0, \vg_i))$, $\Delta^*$ can be approximated as
\begin{equation}\label{eq:one-step-solution-approx}
\begin{aligned}
    \Delta^* &\approx \frac{\xi}{T}\sum_{t=1}^T\frac{G_t}{\max(\|G_1\|_2, \ldots, \|G_t\|_2)} \ .
\end{aligned}
\end{equation}
Note that inspired by AMSGrad \citep{amsgrad}, the expectation $\left\|\E_{\vg\sim p(\vg)}[\nabla_{\valpha_0} R(\gA(\valpha_0, \vg))]\right\|_2$ in \eqref{eq:one-step-solution} has been approximated using $\max(\|G_1\|_2, \ldots, \|G_t\|_2)$ in \eqref{eq:one-step-solution-approx} based on a sample of $\vg$ at time $t$. Interestingly, this approximation is non-decreasing in $t$ and therefore achieves a similar effect of learning rate decay, which may lead to a better-behaved optimization of $\Delta^*$. With the optimal $\Delta^*$ and $\valpha^* = \valpha_0 + \Delta^*$, the final architecture can then be selected as 
$\gA^*\triangleq\argmax_{\gA} p_{\valpha^*}(\gA)$, which finally completes our \emph{\underline{NAS} at \underline{I}nitialization} (NASI) algorithm detailed in Algorithm \ref{alg:nasi}.
Interestingly, this simple and efficient solution in \eqref{eq:one-step-solution-approx} can already allow us to select architectures with competitive performances, as shown in our experiments (Sec.~\ref{sec:exp}).
\begin{algorithm}[t]
  \caption{\underline{NAS} at \underline{I}nitialization (NASI)}
  \label{alg:nasi}
\begin{algorithmic}[1]
  \STATE {\bfseries Input:} dataset $\gD\triangleq(\gX, \gY)$, batch size $b$, steps $T$, penalty coefficient $\mu$, constraint constant $\nu$, initialized model parameters $\vtheta_0$ for one-shot architecture and distribution $p_{\valpha_0}(\gA)$ with initialization $\valpha_0=\vzero$, set $\xi=1$
  \FOR{step $t=1, \ldots, T$}
  \STATE Sample data $\gD_t \sim \gD$ of size $b$
  \STATE Sample $\vg_t \sim p(\vg) =$ Gumbel$(\vzero,\vone)$ and determine sampled architecture $\gA_t$ based on $\valpha_0$, $\vg_t$
  \STATE Evaluate gradient $G_t = \nabla_{\valpha_0}R(\gA_t)$ with data $\gD_t$
  \ENDFOR
  \STATE Estimate $\Delta^*$ with \eqref{eq:one-step-solution-approx} and get $\valpha^*=\valpha_0+\Delta^*$
  \STATE Select architecture $\gA^* = \argmax_{\gA}p_{\valpha^*}(\gA)$
\end{algorithmic}
\end{algorithm}
\section{Label- and Data-agnostic Search of NASI}\label{sec:understanding}
Besides the improved search efficiency by completely avoiding model training during search, NASI can even guarantee the transferability of its selected architectures with its provable label- and data-agnostic search under mild conditions in Sec. \ref{sec:label-agnostic} and Sec. \ref{sec:data-agnostic}, respectively. That is, under this provable label- and data-agnostic search, the final selected architectures on a proxy dataset using NASI are also likely to be selected and hence guaranteed to perform well on the target datasets. So, the transferability of the architectures selected via such label- and data-agnostic search can be naturally guaranteed, which will also be validated in Sec.~\ref{sec:exp} empirically.

\begin{figure}[t]
\centering
\begin{tabular}{cc}
    \hspace{-2mm}\includegraphics[width=0.48\columnwidth]{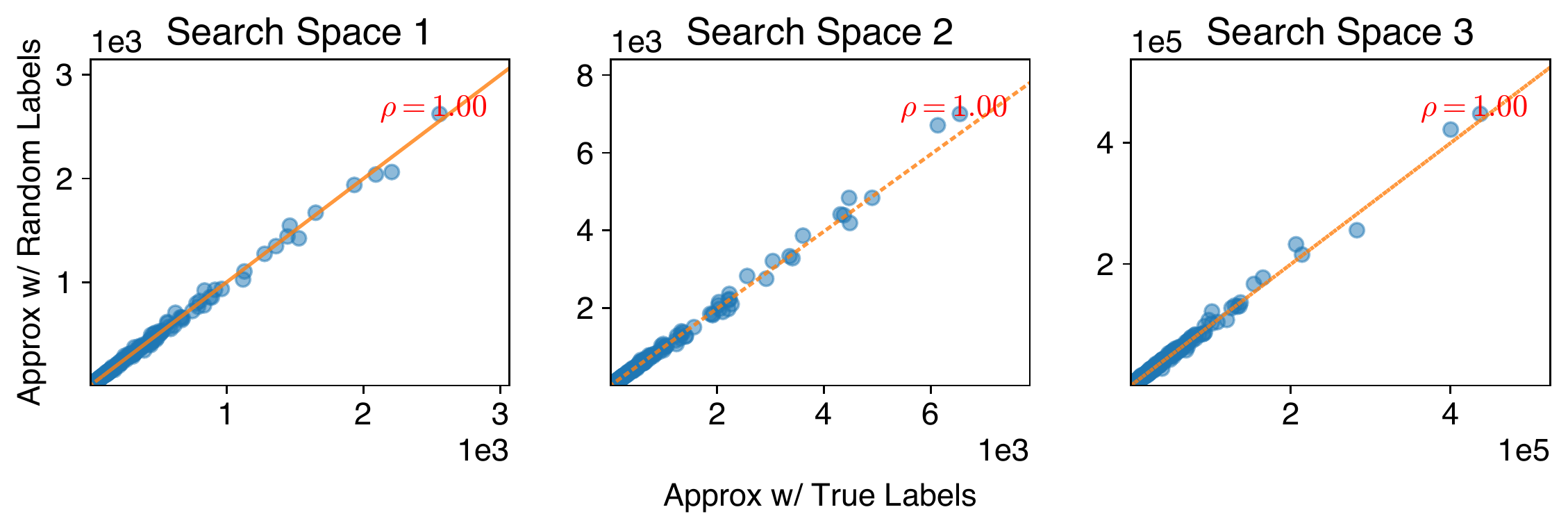} & \hspace{-4mm}\includegraphics[width=0.48\columnwidth]{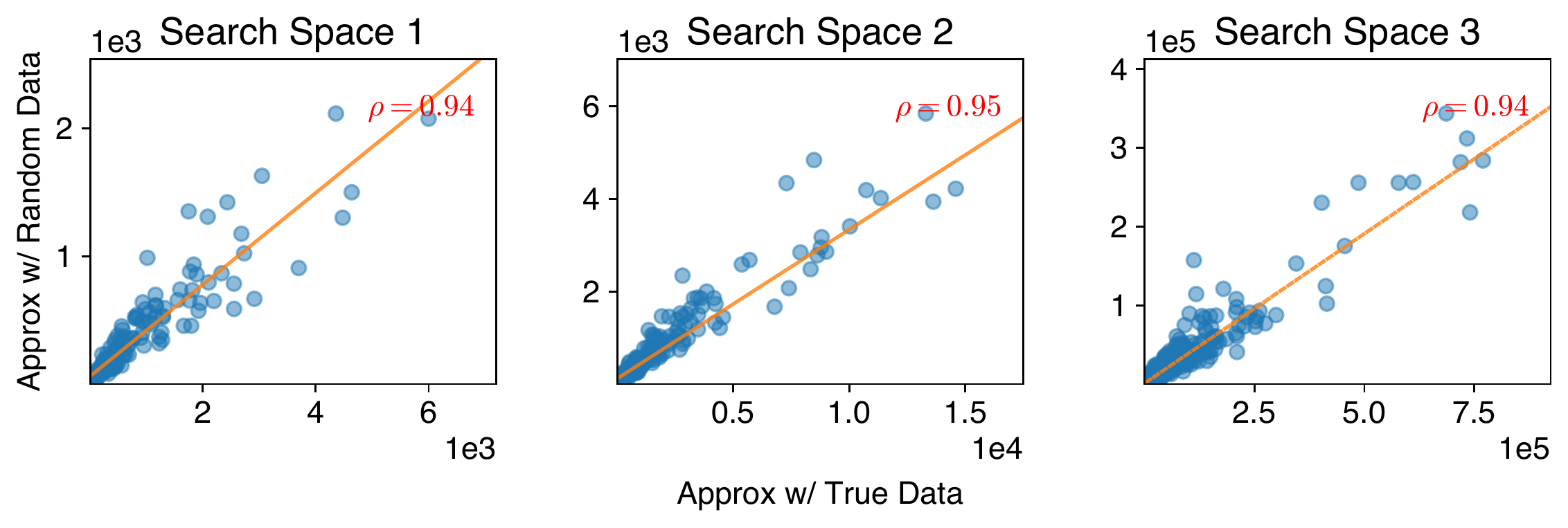} \\
    {(a) Label-agnostic} & {(b) Data-agnostic}
\end{tabular}
\caption{Comparison of the approximated $\|\vTheta_0(\gA)\|_\text{tr}$ following \eqref{eq:final-approx} using the three search spaces of NAS-Bench-1Shot1 on CIFAR-10 (a) between random vs.~true labels, and (b) between random vs.~true data. Each pair $(x,y)$ in the plots denotes the approximation for one candidate architecture in the search space with true vs.~random labels (or data), respectively. The trends of these approximations are further illustrated by the lines in orange. In addition, Pearson correlation coefficient $\rho$ of the approximations with random vs.~true labels (or data) is given in the corner.}
\label{fig:approx-agnostic}
\end{figure}

\subsection{Label-Agnostic Search}\label{sec:label-agnostic}
Our reformulated NAS problem \eqref{eq:nas-ntk} has explicitly revealed that it can be optimized without the need of the labels from a dataset. 
Though our approximation of $\|\vTheta_0(\gA)\|_\text{tr}$ in \eqref{eq:final-approx} seemingly depends on the labels of a dataset, \eqref{eq:final-approx} can, however, be derived using random labels.
This is because the Lipschitz continuity assumption on the loss function required by \eqref{eq:lb-approx}, which is necessary for the derivation of \eqref{eq:final-approx}, remains satisfied when random labels are used. So, the approximation in \eqref{eq:final-approx} (and hence our optimization objective \eqref{eq:nas-final} that is based on this approximation) is label-agnostic, which hence justifies the label-agnostic nature of NASI. Interestingly, NAS algorithm achieving a similar label-agnostic search has also been developed in~\citep{label-agnostic}, which further implies the reasonableness of such label-agnostic search.

The label-agnostic approximation of $\|\vTheta_0(\gA)\|_\text{tr}$ is demonstrated in Fig.~\ref{fig:approx-agnostic}a using the three search spaces of NAS-Bench-1Shot1 with randomly selected labels. 
According to Fig.~\ref{fig:approx-agnostic}a, the large Pearson correlation coefficient (i.e., $\rho \approx 1$) implies a strong correlation between the approximations with random vs.~true labels, which consequently validates the label-agnostic approximation of $\|\vTheta_0(\gA)\|_\text{tr}$. 
Overall, these empirical observations have verified that the approximation of $\|\vTheta_0(\gA)\|_\text{tr}$ and hence NASI based on the optimization over this approximation are label-agnostic, which will be further validated empirically in Sec.~\ref{sec:agnostic-search}.

\subsection{Data-Agnostic Search}\label{sec:data-agnostic}
Besides being label-agnostic, NASI is also guaranteed to be data-agnostic. To justify this, we prove in Proposition~\ref{prop:data-agnostic} (following from the notations in Sec.~\ref{sec:reformulate}) below that  $\|\vTheta_0(\gA)\|_\text{tr}$ is data-agnostic under mild conditions.

\begin{proposition}\label{prop:data-agnostic}
    Suppose that $\vx \in \mathbb{R}^{n_0}$ and $\|\vx\|_2 \leq 1$ for all $\vx \in \gX$ given a dataset $(\gX, \gY)$ of size $|\gX|=m$,
    a given $L$-layer neural architecture $\gA$ is randomly initialized, and the $\gamma$-Lipschitz continuous nonlinearity $\sigma$ satisfies $|\sigma(x)| \leq |x|$.
    Then, for any two data distributions $P(\vx)$ and $Q(\vx)$, denote $Z\triangleq\int \|P(\vx)-Q(\vx)\|\,\text{\emph{d}}\vx$, 
    as $n_1, \ldots, n_{L-1} \rightarrow \infty$ sequentially,  
    \begin{equation*}
    \begin{gathered}
        (mn)^{-1}\Big|\left\|\vTheta_0(\gA; P)\|_\text{\normalfont tr} - \|\vTheta_0(\gA; Q)\right\|_\text{\normalfont tr}\Big| \leq n_0^{-1}ZD(\gamma)
    \end{gathered}
    \end{equation*}
    with probability arbitrarily close to $1$. $D(\gamma)$ is set to $L$ if $\gamma=1$, and $(1 - \gamma^{2 L})/(1 - \gamma^2)$ otherwise.
\end{proposition}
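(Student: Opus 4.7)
The plan is to decouple the architectural and distributional contributions: rewrite the trace norm as an expectation of a per-input squared Jacobian norm, bound the difference between the two expectations via a Hölder step that pulls out the $L^{1}$ distance $Z$, and then control the Jacobian norm uniformly in $\vx$ using the recursive structure of the infinite-width NTK. Extending equation~\eqref{eq:trace-comp} from a fixed dataset to a distribution, the natural expression is
\begin{equation*}
    \|\vTheta_0(\gA; P)\|_{\text{tr}} = m\int \|\nabla_{\vtheta_0(\gA)}\vf(\vx;\vtheta_0(\gA))\|_{\text{F}}^2\,P(\vx)\,d\vx ,
\end{equation*}
and likewise for $Q$. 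Subtracting the two and applying Hölder's inequality yields
\begin{equation*}
    \bigl|\|\vTheta_0(\gA; P)\|_{\text{tr}} - \|\vTheta_0(\gA; Q)\|_{\text{tr}}\bigr| \;\leq\; m\sup_{\|\vx\|_2\leq 1}\|\nabla_{\vtheta_0(\gA)}\vf(\vx;\vtheta_0(\gA))\|_{\text{F}}^{2}\cdot Z ,
\end{equation*}
so it suffices to prove the supremum is at most $nD(\gamma)/n_0$ with probability arbitrarily close to $1$ in the sequential infinite-width limit.

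To bound the supremum I would invoke the standard Jacot-type convergence: under the NTK parameterization of~\citet{ntk}, as $n_1,\ldots,n_{L-1}\to\infty$ in order, the Jacobian Gram matrix concentrates on $\Theta^L(\vx,\vx)\,\mI_n$ in probability, so $\|\nabla_{\vtheta_0}\vf(\vx;\vtheta_0)\|_{\text{F}}^{2}\to n\,\Theta^L(\vx,\vx)$. The deterministic scalar satisfies the NTK recursion $\Theta^\ell = \Sigma^\ell + \dot\Sigma^\ell\,\Theta^{\ell-1}$ with base $\Theta^1 = \Sigma^1 = \|\vx\|_2^2/n_0 \leq 1/n_0$, where $\Sigma^\ell$ is the diagonal of the conjugate kernel and $\dot\Sigma^\ell = \E_{h\sim\mathcal{N}(0,\Sigma^{\ell-1})}[\sigma'(h)^2]$. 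The assumption $|\sigma(x)|\leq|x|$ gives $\Sigma^{\ell+1}=\E[\sigma(h^\ell)^2]\leq\E[(h^\ell)^2]=\Sigma^\ell$, so induction yields $\Sigma^\ell\leq 1/n_0$ for every $\ell$; the $\gamma$-Lipschitz assumption gives $\dot\Sigma^\ell\leq\gamma^2$. Unrolling the recursion,
\begin{equation*}
    \Theta^L(\vx,\vx) = \sum_{\ell=1}^{L}\Sigma^\ell(\vx,\vx)\prod_{\ell'=\ell+1}^{L}\dot\Sigma^{\ell'}(\vx,\vx)\;\leq\;\frac{1}{n_0}\sum_{k=0}^{L-1}\gamma^{2k}\;=\;\frac{D(\gamma)}{n_0} ,
\end{equation*}
which reduces to $L/n_0$ when $\gamma=1$ and to $(1-\gamma^{2L})/[n_0(1-\gamma^2)]$ otherwise. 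Substituting this bound back into the Hölder step and dividing by $mn$ delivers the claimed inequality.

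The main obstacle is promoting the pointwise Jacot convergence into a \emph{uniform} high-probability statement over the unit ball $\{\|\vx\|_2\leq 1\}$, since the Hölder supremum ranges over a continuum rather than a finite dataset. I would handle this by either a covering-net argument exploiting the Lipschitz dependence of the finite-width Jacobian norm on $\vx$ at each fixed draw of $\vtheta_0$, or by appealing directly to continuity of the limiting conjugate and tangent kernels together with compactness of the unit ball to propagate pointwise convergence to uniform convergence. In both cases the failure probability can be pushed arbitrarily small by taking each $n_\ell$ sufficiently large in the prescribed sequential order, giving the proposition with probability arbitrarily close to $1$.
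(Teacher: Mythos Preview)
Your argument is correct and its core ingredients match the paper's proof: the recursion bound $\Theta^{L}(\vx,\vx)\leq n_0^{-1}\sum_{k=0}^{L-1}\gamma^{2k}$ obtained from $\Sigma^{\ell}\leq n_0^{-1}$ and $\dot\Sigma^{\ell}\leq\gamma^{2}$ is exactly the paper's Lemma~2, and your H\"older step pulling out $Z$ is equivalent to the paper's device of introducing the auxiliary density $\widetilde P(\vx)=Z^{-1}\lvert P(\vx)-Q(\vx)\rvert$ and re-applying the expectation bound.

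The one substantive difference is the order of operations, and it is worth noting because it eliminates the obstacle you flag. You take the supremum of the \emph{random} finite-width quantity $\lVert\nabla_{\vtheta_0}\vf(\vx;\vtheta_0)\rVert_{\text{F}}^{2}$ over the unit ball and only afterwards invoke Jacot convergence, which forces you to upgrade pointwise convergence in probability to a uniform statement and hence to reach for a covering/continuity argument. The paper reverses the order: it first applies Jacot's theorem to pass from $\vTheta_0$ to the deterministic limiting kernel $\vTheta_{\infty}^{(L)}\otimes\mathbb{I}_{n_L}$ (this is the sole probabilistic step, and it is stated for the fixed dataset, not a continuum), and then bounds $\vTheta_{\infty}^{(L)}(\vx,\vx)\leq n_0^{-1}D(\gamma)$ as a purely deterministic inequality valid for every $\vx$ with $\lVert\vx\rVert_2\leq 1$. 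With this ordering there is nothing random left when the supremum (or integral against $\lvert P-Q\rvert$) is taken, so the uniform-convergence issue never arises. Your route still goes through, but swapping the order of ``limit'' and ``bound'' gives a cleaner proof with no extra machinery.
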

Its proof is in Appendix \ref{app_sec:proof-prop2}.
Proposition~\ref{prop:data-agnostic} reveals that for any neural architecture $\gA$, $\|\vTheta_0(\gA)\|_\text{tr}$ is data-agnostic if either one of the following conditions is satisfied: (a) Different datasets achieve a small $Z$ or (b) the input dimension $n_0$ is large. Interestingly, these two conditions required by the data-agnostic $\|\vTheta_0(\gA)\|_\text{tr}$ can be well-satisfied in practice. Firstly, we always have $Z<2$ according to the property of probability distributions. Moreover, many real-world datasets are indeed of high dimensions such as ${\sim}10^3$ for CIFAR-10 \citep{cifar} and ${\sim}10^5$ for COCO \citep{coco}. Since $\|\vTheta_0(\gA)\|_\text{tr}$ under such mild conditions is data-agnostic, NASI using $\|\vTheta_0(\gA)\|_\text{tr}$ as the optimization objective in \eqref{eq:nas-ntk} is also data-agnostic.

While $\|\vTheta_0(\gA)\|_\text{tr}$ is costly to evaluate, we demonstrate in Fig.~\ref{fig:approx-agnostic}b that the approximated $\|\vTheta_0(\gA)\|_\text{tr}$ in \eqref{eq:final-approx} is also data-agnostic by using random data generated with the standard Gaussian distribution. Similar to the results using true vs.~random labels in Fig.~\ref{fig:approx-agnostic}a, the approximated $\|\vTheta_0(\gA)\|_\text{tr}$ using random vs.~true data are also highly correlated by achieving a large Pearson correlation coefficient (i.e., $\rho>0.9$). Interestingly, the correlation coefficient here is slightly smaller than the one of label-agnostic approximations in Fig.~\ref{fig:approx-agnostic}a, which implies that our approximated $\|\vTheta_0(\gA)\|_\text{tr}$ is more agnostic to the labels than data. 
Based on these results, the approximated $\|\vTheta_0(\gA)\|_\text{tr}$ is guaranteed to be data-agnostic. So, NASI based on the optimization over such a data-agnostic approximation is also data-agnostic, which will be further validated empirically in Sec.~\ref{sec:agnostic-search}.
\section{Experiments}
\label{sec:exp}
\subsection{Search in NAS-Bench-1Shot1}\label{sec:search}
\begin{figure*}[t]
\centering
\includegraphics[width=0.92\textwidth]{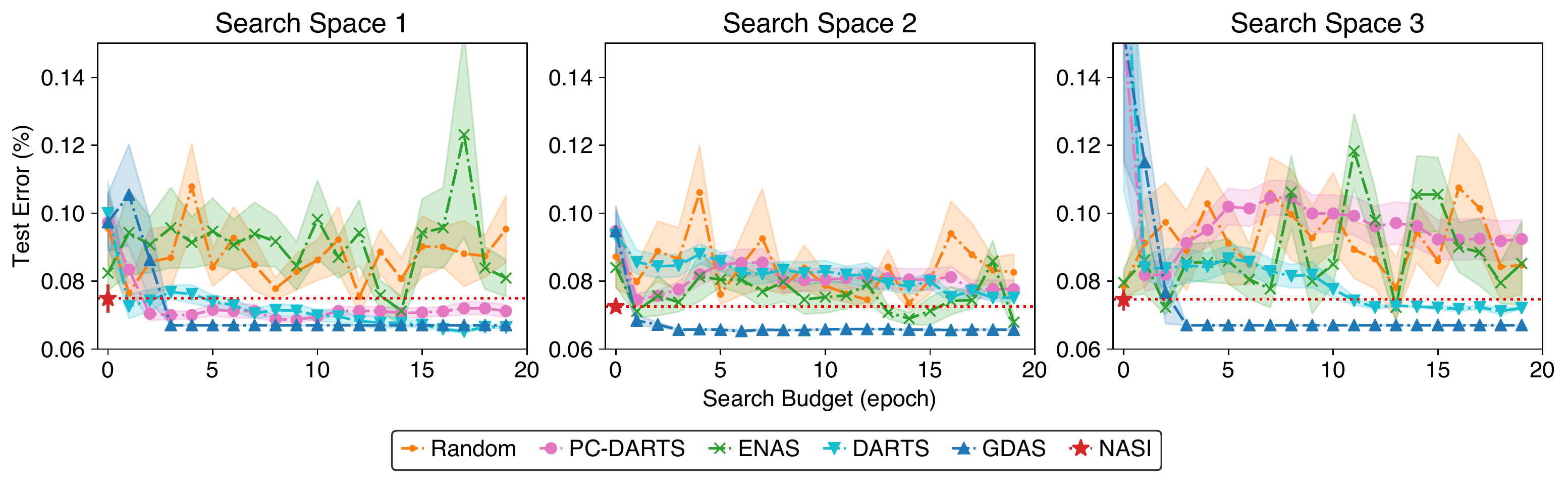}
\vskip -0.05in
\caption{Comparison of search efficiency (search budget in $x$-axis) and effectiveness (test error evaluated on CIFAR-10 in $y$-axis) between NASI and other NAS algorithms in the three search spaces of NAS-Bench-1Shot1. 
The test error for each algorithm is reported with the mean and standard error after ten independent searches.}
\label{fig:search-nasbench}
\end{figure*}

We firstly validate the search efficiency and effectiveness of our NASI in the three search spaces of NAS-Bench-1Shot1 \citep{nasbench1shot1} on CIFAR-10.
As these three search spaces are relatively small, a lower penalty coefficient $\mu$ and a larger constraint $\nu$ (i.e., $\mu{=}1$ and $\nu{=}1000$) are adopted to encourage the selection of high-complexity architectures in the optimization of \eqref{eq:nas-final}. Here, $\nu$ is determined adaptively as shown in Appendix \ref{app_sec:determine}.

Figure~\ref{fig:search-nasbench} shows the results comparing the efficiency and effectiveness between NASI with a one-epoch search budget and other NAS algorithms with a maximum search budget of 20 epochs to allow sufficient model training during their search process. Figure~\ref{fig:search-nasbench} reveals that among all these three search spaces, NASI consistently selects architectures of better generalization performance than other NAS algorithms with a search budget of only one epoch. Impressively, the architectures selected by our one-epoch NASI can also achieve performances that are comparable to the best-performing NAS algorithms with $19\times$ more search budget.
Above all, NASI guarantees its benefits of improving the search efficiency of NAS algorithms considerably without sacrificing the generalization performance of its selected architectures.
\begin{table*}[t]
\caption{Performance comparison among state-of-the-art (SOTA) image classifiers on CIFAR-10/100. The performance of the final architectures selected by NASI is reported with the mean and standard deviation of five independent evaluations. The search costs are evaluated on a single Nvidia 1080Ti.}
\label{tab:sota-cifar}
\centering
\resizebox{\textwidth}{!}{
\begin{threeparttable}
\begin{tabular}{lcccccc}
\toprule
\multirow{2}{*}{\textbf{Architecture}} & \multicolumn{2}{c}{\textbf{Test Error (\%)}} &
\multicolumn{2}{c}{\textbf{Params (M)}} &
\multirow{2}{2cm}{\textbf{ Search Cost (GPU Hours)}} &
\multirow{2}{*}{\textbf{Search Method}} \\
\cmidrule(l){2-3} \cmidrule(l){4-5} 
& \textbf{C10} & \textbf{C100} & \textbf{C10} & \textbf{C100} & \\
\midrule 
DenseNet-BC \citep{densenet} & 3.46$^*$ & 17.18$^*$ & 25.6 & 25.6 & - & manual\\
\midrule 
NASNet-A \citep{nasnet} & 2.65 & - & 3.3 & - & 48000 & RL\\
AmoebaNet-A \citep{amoebanet} & 3.34$\pm$0.06 & 18.93$^\dagger$ & 3.2 & 3.1 & 75600 & evolution\\
PNAS \citep{pnas} & 3.41$\pm$0.09 & 19.53$^*$ & 3.2 & 3.2 & 5400 & SMBO\\
ENAS \citep{enas} & 2.89 & 19.43$^*$ & 4.6 & 4.6 & 12 & RL\\
NAONet \citep{naonet} & 3.53 & - & 3.1 & - & 9.6 & NAO\\
\midrule
DARTS (2nd) \citep{darts} & 2.76$\pm$0.09 & 17.54$^\dagger$ & 3.3 & 3.4 & 24 & gradient\\
GDAS \citep{gdas} & 2.93 & 18.38 & 3.4 & 3.4 & 7.2 & gradient\\
NASP \citep{nasp} & 2.83$\pm$0.09 & - & 3.3 & - & 2.4 & gradient\\
P-DARTS \citep{p-darts} & 2.50 & - & 3.4 & - & 7.2 & gradient\\
DARTS- (avg) \citep{darts-} & 2.59$\pm$0.08 & 17.51$\pm$0.25 & 3.5 & 3.3 & 9.6 & gradient\\
SDARTS-ADV \citep{sdarts} & 2.61$\pm$0.02 & - & 3.3 & - & 31.2 & gradient\\
R-DARTS (L2) \citep{r-darts}  & 2.95$\pm$0.21 & 18.01$\pm$0.26 & - & - & 38.4 & gradient\\
\midrule
TE-NAS$^{\sharp}$ \citep{te-nas} & 2.83$\pm$0.06 & 17.42$\pm$0.56 & 3.8 & 3.9 & 1.2 & training-free \\
\midrule
NASI-FIX & 2.79$\pm$0.07 & 16.12$\pm$0.38 & 3.9 & 4.0 & \textbf{0.24} & training-free \\
NASI-ADA & 2.90$\pm$0.13 & 16.84$\pm$0.40 & 3.7 & 3.8 & \textbf{0.24} & training-free \\
\bottomrule
\end{tabular}
\begin{tablenotes}\footnotesize
    \item[$\dagger$] Reported by \citet{gdas} with their experimental settings.
    \item[$*$] Obtained by training corresponding architectures without cutout \citep{cutout} augmentation.
    \item[$\sharp$] Evaluated using our experimental settings in Appendix \ref{app_sec:evaluation}.
\end{tablenotes}
\end{threeparttable}
} 
\end{table*}

\subsection{Search in the DARTS Search Space}\label{sec:evaluation}
We then compare NASI with other NAS algorithms in a more complex search space than NAS-Bench-1Shot1, i.e., the DARTS \citep{darts} search space (detailed in Appendix \ref{app_sec:search-space}).
Here, NASI selects the architecture with a search budget of $T{=}100$, batch size of $b{=}64$ and $\mu{=}2$.
Besides, two different methods are applied to determine the constraint $\nu$ during the search process: the adaptive determination with an initial value of $500$ and the fixed determination with a value of $100$.
The final selected architectures with adaptive and fixed $\nu$ are, respectively, called NASI-ADA and NASI-FIX (visualized in Appendix \ref{app_sec:architecture}), which are then evaluated on CIFAR-10/100 \citep{cifar} and ImageNet \citep{imagenet} following Appendix \ref{app_sec:evaluation}.

Table~\ref{tab:sota-cifar} summarizes the generalization performance of the final architectures selected by various NAS algorithms on CIFAR-10/100.
Compared with popular training-based NAS algorithms, NASI achieves a substantial improvement in search efficiency and maintains a competitive generalization performance.
Even when compared with other training-free NAS algorithm (i.e., TE-NAS), NASI is also able to select competitive or even outperformed architectures with a smaller search cost. 
Besides, NASI-FIX achieves the smallest test error on CIFAR-100, which also demonstrates the transferability of the architectures selected by NASI over different datasets. Consistent results on ImageNet can be found in Appendix \ref{app_sec:imagenet}.
\begin{table}[t]
\renewcommand\multirowsetup{\centering}
\caption{Performance comparison of the architectures selected by NASI with random vs. true labels/data on CIFAR-10. The standard method denotes the search with the true labels and data of CIFAR-10 and each test error in the table is reported with the mean and standard deviation of five independent searches.}
\label{tab:random}
\centering
\begin{tabular}{lcccc}
\toprule
\multirow{2}{*}{\textbf{Method}} & \multicolumn{3}{c}{\textbf{NAS-Bench-1Shot1}} &
\multirow{2}{*}{\textbf{DARTS}} \\
\cmidrule(l){2-4}
& \textbf{S1} & \textbf{S2} & \textbf{S3} & \\
\midrule 
Standard & 7.3$\pm$1.1 & 7.2$\pm$0.4 & 7.2$\pm$0.6 & 2.95$\pm$0.13\\
Random Label & 6.8$\pm$0.3 & 7.0$\pm$0.4 & 7.5$\pm$1.4 & 2.90$\pm$0.12 \\
Random Data & 6.6$\pm$0.2 & 7.5$\pm$0.7 & 7.3$\pm$0.9 & 2.97$\pm$0.10 \\
\bottomrule
\end{tabular}
\end{table}
\subsection{Label- and Data-Agnostic Search}\label{sec:agnostic-search}
To further validate the label- and data-agnostic search achieved by our NASI as discussed in Sec.~\ref{sec:understanding}, we compare the generalization performance of the final architectures selected by NASI using random labels and data on CIFAR-10.
The random labels are randomly selected from all possible categories while the random data is i.i.d.~sampled from the standard Gaussian distribution.
Both NAS-Bench-1Shot1 and the DARTS search space are applied in this performance comparison where the same search and training settings in Sec.~\ref{sec:search} and Sec.~\ref{sec:evaluation} are adopted.

Table~\ref{tab:random} summarizes the performance comparison. Interestingly, among all these four search spaces, comparable generalization performances are obtained on CIFAR-10 for both the architectures selected with random labels (or data) and the ones selected with true labels and data. These results hence confirm the label- and data-agnostic search achieved by NASI, which therefore also further validates the transferability of the architectures selected by NASI over different datasets.
\section{Conclusion}
This paper describes a novel NAS algorithm called NASI that exploits the capability of NTK for estimating the performance of candidate architectures at initialization.
Consequently, NASI can
completely avoid model training during the search process to achieve higher search efficiency than existing NAS algorithms.
NASI can also achieve competitive generalization performance across different search spaces and benchmark datasets.
Interestingly, NASI is guaranteed to be label- and data-agnostic under mild conditions, which therefore implies the transferability of the final selected architectures by NASI over different datasets.
With all these advantages, NASI can thus be adopted to select well-performing architectures for unsupervised tasks and larger-scale datasets efficiently, which to date remains challenging to other training-based NAS algorithms. 
Furthermore, NASI can also be integrated into other training-based one-shot NAS algorithms to improve their search efficiency while preserving the search effectiveness of these training-based algorithms.

\section*{Reproducibility Statement}
To guarantee the reproducibility of the theoretical analysis in this paper, we have provided complete proof of our propositions and also the justification of certain assumptions in Appendix \ref{app_sec:proofs}. Moreover, we have conducted adequate ablation studies to further investigate the impacts of these assumptions as well as the approximations used in our method on the final search results in Appendix \ref{app_sec:ablation}. Meanwhile, to guarantee the reproducibility of the empirical results in this paper, we have provided our codes in the supplementary materials and detailed experimental settings in Appendix \ref{app_sec:settings}.

\subsubsection*{Acknowledgments}
This research is supported by A*STAR under its RIE$2020$ Advanced Manufacturing and Engineering (AME) Programmatic Funds (Award A$20$H$6$b$0151$).

\bibliography{iclr2022}
\bibliographystyle{iclr2022}

\appendix
\begin{appendices}
\section{Theorems and Proofs}\label{app_sec:proofs}
We firstly introduce the theorems proposed by \citet{ntk, ntk-linear}, which reveal the capability of NTK in characterizing the training dynamics of infinite-width DNNs. Note that the these theorems follow the parameterization and initialization of DNNs in \citep{ntk}. Our analysis based on these theorems thus also needs to follow such parameterization and initialization of DNNs.

\subsection{Neural Tangent Kernel at Initialization}
\citet{ntk} have validated that the outputs of infinite-width DNNs at initialization tends to Gaussian process (\autoref{th:gp-init}) and have further revealed the deterministic limit of NTK at initialization (\autoref{th:ntk-init}). We denote $\mathbb{I}_{n_L}$ as a $n_L \times n_L$ matrix with all elements being 1.
\begin{theorem}[\citet{ntk}]\label{th:gp-init}
    For a network of depth $L$ at initialization, with a Lipschitz nonlinearity $\sigma$, and in the limit as $n_1, \cdots, n_{L-1} \rightarrow \infty$ sequentially, the output functions $f_{\vtheta, i}$ for $i=1,\cdots,n_L$, tend (in law) to iid centered Gaussian processes of covariance $\Sigma^{(L)}$, where $\Sigma^{(L)}$ is defined recursively by
    \begin{align*}
        \Sigma^{(1)}(\vx,\vx') &= \frac{1}{n_0}\vx^{\top}\vx' + \beta^2\ , \\
        \Sigma^{(L)}(\vx,\vx') &= \mathbb{E}_{g \sim \gN(0, \Sigma^{(L-1)})}\left[ \sigma(g(\vx))\sigma(g(\vx'))\right] + \beta^2
    \end{align*}
    such that the expectation is with respect to a centered Gaussian process $\vf$ with covariance $\Sigma^{(L-1)}$.
\end{theorem}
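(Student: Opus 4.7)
The plan is to prove Theorem~\ref{th:gp-init} by induction on the depth $L$, taking the widths to infinity sequentially from the bottom layer up. Throughout, I write the pre-activations as $\tilde{\alpha}^{(\ell)}_i(\vx) = n_{\ell-1}^{-1/2}\sum_{j} W^{(\ell)}_{ij}\sigma(\tilde{\alpha}^{(\ell-1)}_j(\vx)) + \beta b^{(\ell)}_i$ (with the convention $\sigma(\tilde{\alpha}^{(0)}_j(\vx)) = x_j$), where all $W^{(\ell)}_{ij}$ and $b^{(\ell)}_i$ are iid standard Gaussians independent across layers.

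For the base case $L=1$, $f_{\vtheta,i}$ is an explicit linear combination of independent standard Gaussians in each of its evaluations, so jointly over any finite set of inputs $\{\vx_1,\ldots,\vx_r\}$ the vector $(f_{\vtheta,i}(\vx_s))_{i,s}$ is exactly a centered Gaussian, and a direct computation of the second moment gives the covariance $\Sigma^{(1)}(\vx,\vx') = n_0^{-1}\vx^\top\vx' + \beta^2$; independence across $i$ follows from the independence of the rows of $W^{(1)}$ and of the entries of $b^{(1)}$.

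For the inductive step, I would assume that as $n_1,\ldots,n_{\ell-1}\to\infty$ sequentially the $\tilde{\alpha}^{(\ell)}_j$ converge in finite-dimensional law to iid centered Gaussian processes with covariance $\Sigma^{(\ell)}$, and then analyse $\tilde{\alpha}^{(\ell+1)}_i$. Condition on the previous layer: given $\tilde{\alpha}^{(\ell)}$, the variables $\tilde{\alpha}^{(\ell+1)}_i$ are, across $i$, independent Gaussians (linear functionals of the fresh weights $W^{(\ell+1)}_{ij}$ and $b^{(\ell+1)}_i$), with conditional covariance across inputs equal to $n_\ell^{-1}\sum_j \sigma(\tilde{\alpha}^{(\ell)}_j(\vx))\sigma(\tilde{\alpha}^{(\ell)}_j(\vx')) + \beta^2$. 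By the inductive hypothesis the $\tilde{\alpha}^{(\ell)}_j$ are iid, so a law of large numbers applied to this empirical average (Lipschitz continuity of $\sigma$ gives the needed integrability, via the Gaussian tail of $\tilde{\alpha}^{(\ell)}_j$) yields convergence in probability to $\mathbb{E}_{g\sim\gN(0,\Sigma^{(\ell)})}[\sigma(g(\vx))\sigma(g(\vx'))] + \beta^2 = \Sigma^{(\ell+1)}(\vx,\vx')$. Combining this with the conditional Gaussianity (Lévy continuity applied to the conditional characteristic functions, whose quadratic forms converge) gives finite-dimensional Gaussian convergence with the claimed covariance, and independence across $i$ is preserved.

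The main obstacle I expect is the joint justification of (i) the interchange of the conditional Gaussian limit with the LLN limit of its variance, and (ii) the fact that $\sigma$ is only Lipschitz rather than bounded or smooth: integrability and uniform control of $\sigma(\tilde{\alpha}^{(\ell)}_j(\vx))^2$ have to be handled carefully so that the inductive Gaussian-process law for layer $\ell$ can be fed into the empirical-average computation at layer $\ell+1$. This is exactly the step where the sequential nature of the limits $n_1,\ldots,n_{L-1}\to\infty$ matters: the limit at layer $\ell$ must be taken (and the $\tilde{\alpha}^{(\ell)}_j$ made into genuine iid Gaussian processes) before the LLN at layer $\ell+1$ can be invoked, and an iterated-limits rather than a joint-limits argument avoids the technical subtleties that would otherwise arise.
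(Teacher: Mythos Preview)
The paper does not supply a proof of this statement: Theorem~\ref{th:gp-init} is quoted verbatim as a result of \citet{ntk} and is used purely as background input to the paper's own propositions, so there is no in-paper argument to compare against. That said, your sketch is exactly the standard proof from \citet{ntk}: induction on depth, with the base case an explicit Gaussian computation and the inductive step combining (i) conditional Gaussianity of the next layer given the previous pre-activations (fresh iid weights and biases) with (ii) a law of large numbers for the empirical covariance $n_\ell^{-1}\sum_j \sigma(\tilde{\alpha}^{(\ell)}_j(\vx))\sigma(\tilde{\alpha}^{(\ell)}_j(\vx'))$, the Lipschitz hypothesis on $\sigma$ supplying the second-moment control needed for the LLN. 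Your identification of the delicate point---that the sequential limit $n_1,\ldots,n_{L-1}\to\infty$ is what makes the $\tilde{\alpha}^{(\ell)}_j$ genuinely iid before the LLN at layer $\ell{+}1$ is applied---is precisely the technical reason \citet{ntk} state the theorem for sequential rather than simultaneous limits, and your proposed route through conditional characteristic functions and L\'evy continuity is the clean way to pass from convergence of the random conditional covariance to convergence of the unconditional finite-dimensional distributions.
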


\begin{theorem}[\citet{ntk}]\label{th:ntk-init}
    For a network of depth $L$ at initialization, with a Lipschitz nonlinearity $\sigma$, and in the limit as $n_1, \cdots, n_{L-1} \rightarrow \infty$ sequentially, the NTK $\vTheta^{(L)}$ converges in probability to a deterministic limiting kernel:
    \begin{align*}
        \vTheta^{(L)} \rightarrow \vTheta_{\infty}^{(L)} \otimes \mathbb{I}_{n_L}\ .
    \end{align*}
    Kernel $\vTheta_{\infty}^{(L)}: \mathbb{R}^{n_0 \times n_0} \rightarrow \mathbb{R}$ is defined recursively by
    \begin{align*}
        \vTheta_{\infty}^{(1)}(\vx,\vx') &= \Sigma^{(1)}(\vx,\vx') \ ,\\
        \vTheta_{\infty}^{(L)}(\vx,\vx') &= \vTheta_{\infty}^{(L-1)}(\vx,\vx')\dot{\Sigma}^{(L)}(\vx,\vx') + \Sigma^{(L)}(\vx,\vx')\ ,
    \end{align*}
    where 
    \begin{align*}
        \dot{\Sigma}^{(L)}(\vx,\vx') = \mathbb{E}_{g \sim \gN(0, \Sigma^{(L-1)})}\left[ \dot{\sigma}(g(\vx))\dot{\sigma}(g(\vx'))\right]
    \end{align*}
    such that the expectation is with respect to a centered Gaussian process $\vf$ with covariance $\Sigma^{(L-1)}$ and  $\dot{\sigma}$ denotes the derivative of $\sigma$.
 \end{theorem}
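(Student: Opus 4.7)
The plan is to prove the statement by induction on the depth $L$, decomposing the NTK layer by layer and invoking Theorem~\ref{th:gp-init} at each step so that Gaussian-process limits of pre-activations can be replaced by their deterministic covariances.

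For the base case $L=1$, the network is linear: $f^{(1)}(\vx)=n_0^{-1/2}W^{(1)}\vx+\beta b^{(1)}$. A direct computation of $\sum_p\partial_{\theta_p}f^{(1)}(\vx)\,\partial_{\theta_p}f^{(1)}(\vx')^{\top}$ with $\theta=(W^{(1)},b^{(1)})$ yields $(n_0^{-1}\vx^{\top}\vx'+\beta^2)\,\mathbb{I}_{n_1}=\Sigma^{(1)}(\vx,\vx')\otimes\mathbb{I}_{n_1}$, which matches the base of the recursion.

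For the inductive step at depth $L$, split the parameters into $\theta_{<L}$ (first $L-1$ layers) and $\theta_L=\{W^{(L)},b^{(L)}\}$, so $\vTheta^{(L)}=A+B$. The last-layer piece $B$ is computed directly from $f^{(L)}(\vx)=n_{L-1}^{-1/2}W^{(L)}\sigma(f^{(L-1)}(\vx))+\beta b^{(L)}$ and equals $\bigl(n_{L-1}^{-1}\sigma(f^{(L-1)}(\vx))^{\top}\sigma(f^{(L-1)}(\vx'))+\beta^2\bigr)\mathbb{I}_{n_L}$; invoking Theorem~\ref{th:gp-init} together with the law of large numbers over the $n_{L-1}$ coordinates gives $B\to\Sigma^{(L)}(\vx,\vx')\otimes\mathbb{I}_{n_L}$. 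The back-propagation piece $A$ comes from the chain rule: for each $\theta_p\in\theta_{<L}$, $\partial_{\theta_p}f^{(L)}_k(\vx)=n_{L-1}^{-1/2}\sum_j W^{(L)}_{kj}\,\dot\sigma(f^{(L-1)}_j(\vx))\,\partial_{\theta_p}f^{(L-1)}_j(\vx)$, which rearranges into
\begin{equation*}
A_{kl}(\vx,\vx')=n_{L-1}^{-1}\sum_{j,j'}W^{(L)}_{kj}W^{(L)}_{lj'}\,\dot\sigma(f^{(L-1)}_j(\vx))\,\dot\sigma(f^{(L-1)}_{j'}(\vx'))\,\vTheta^{(L-1)}_{jj'}(\vx,\vx').
\end{equation*}
By the inductive hypothesis $\vTheta^{(L-1)}_{jj'}\approx\delta_{jj'}\,\vTheta^{(L-1)}_{\infty}(\vx,\vx')$, and because $W^{(L)}$ is i.i.d.\ standard Gaussian and independent of $\theta_{<L}$, the off-diagonal ($j\neq j'$) contributions vanish in expectation and concentrate to $0$, while the diagonal part is controlled via $\mathbb{E}[W^{(L)}_{kj}W^{(L)}_{lj}]=\delta_{kl}$ together with a second LLN over $n_{L-1}$, giving $A_{kl}\to\delta_{kl}\,\vTheta^{(L-1)}_{\infty}(\vx,\vx')\,\dot\Sigma^{(L)}(\vx,\vx')$. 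Summing $A+B$ reproduces the stated recursion.

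The main obstacle is justifying these convergences in probability: $A_{kl}$ is a quadratic form in the $W^{(L)}$ entries whose coefficients themselves depend, through the inductive NTK and through $\dot\sigma(f^{(L-1)})$, on all prior random weights. The sequential limit $n_1,\dots,n_{L-1}\to\infty$ is essential here — taking $n_{L-2}\to\infty$ first lets us treat $f^{(L-1)}$ as an exact Gaussian process with covariance $\Sigma^{(L-1)}$ (Theorem~\ref{th:gp-init}) and $\vTheta^{(L-1)}$ as the deterministic kernel $\vTheta^{(L-1)}_{\infty}\otimes\mathbb{I}_{n_{L-1}}$ before averaging over $W^{(L)}$, which decouples the two sources of randomness and lets us control fluctuations by a standard second-moment estimate. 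A minor technical wrinkle is that $\sigma$ is only assumed Lipschitz, so $\dot\sigma$ exists only almost everywhere; this is handled by noting that the Gaussian law of $f^{(L-1)}$ is absolutely continuous, so $\dot\sigma(f^{(L-1)})$ is well defined almost surely and uniformly bounded by the Lipschitz constant of $\sigma$, which suffices to apply dominated convergence in the LLN step.
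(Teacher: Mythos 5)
The paper does not prove this statement: Theorem~\ref{th:ntk-init} is imported verbatim from \citet{ntk} as a background result, so there is no internal proof to compare yours against. Judged on its own terms, your sketch correctly reconstructs the original induction of \citet{ntk}. The decomposition $\vTheta^{(L)}=A+B$ into the last-layer block (which concentrates to $\Sigma^{(L)}(\vx,\vx')\,\delta_{kl}$ by a law of large numbers over the $n_{L-1}$ hidden units once Theorem~\ref{th:gp-init} makes them i.i.d.\ Gaussians) and the back-propagated block (which, after the inductive hypothesis removes the $j\neq j'$ terms, concentrates to $\vTheta^{(L-1)}_{\infty}(\vx,\vx')\,\dot{\Sigma}^{(L)}(\vx,\vx')\,\delta_{kl}$) is exactly the right one, and you correctly identify the two points that need care: the sequential limit $n_1,\dots,n_{L-2}\to\infty$ before $n_{L-1}\to\infty$ turns the inductive hypothesis into an exact identity over the finitely many indices $j,j'$ (so no uniformity-in-$n_{L-1}$ issue arises and the randomness of $W^{(L)}$ is cleanly decoupled from that of the earlier layers), and the merely almost-everywhere differentiability of a Lipschitz $\sigma$ is harmless because the limiting law of $f^{(L-1)}_j(\vx)$ is Gaussian (one should add the caveat that this law is nondegenerate, which holds whenever $\Sigma^{(L-1)}(\vx,\vx)>0$, e.g.\ for $\beta>0$ or $\vx\neq\vzero$). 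One useful observation falls out of your computation: both blocks carry the factor $\delta_{kl}$, so the limit is $\vTheta^{(L)}_{\infty}$ tensored with the \emph{identity} on the $n_L$ outputs, whereas the paper's preamble defines $\mathbb{I}_{n_L}$ as the all-ones matrix; your derivation shows that definition must be a misstatement (it is immaterial to how the theorem is later used, since only $\|\mathbb{I}_{n_L}\|_{\text{tr}}=n_L$ is invoked, and both matrices have the same trace norm).
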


\subsection{Training Dynamics of Infinite-Width Neural Networks}\label{app_sec:linear-dynamic}
Given $\lambda_{\min}(\vTheta)$ as the minimal eigenvalue of NTK $\vTheta$ and define $\eta_{\text{critical}} \triangleq 2(\lambda_{\min}(\vTheta) + \lambda_{\max}(\vTheta))^{-1}$, \citet{ntk-linear} have characterized the training dynamics of infinite-wide neural networks as below. 
\begin{theorem}[\citet{ntk-linear}]
\label{th:infinite-dynamics}
    Let $n_1=\cdots=n_{L-1}=k$ and assume $\lambda_{\min}(\vTheta) > 0$. Applying gradient descent with learning rate $\eta < \eta_{\text{\normalfont critical}}$ (or gradient flow), for every $\vx \in \mathbb{R}^{n_0}$ with $\|\vx\| \leq 1$, with probability arbitrarily close to 1 over random initialization,
    $$\sup_{t\geq0}\left\|\vf_t-\vf^{\text{\normalfont lin}}_t\right\|_2=\gO(k^{-\frac{1}{2}}) \quad \text{as} \; k \rightarrow \infty \ .$$
\end{theorem}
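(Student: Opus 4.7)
The plan is to couple the true network dynamics with those of its linearization and bound the discrepancy via a Gronwall-type argument, using the fact that in the wide-width limit the parameters barely move relative to initialization. Throughout I would write $J_t(\vx) \triangleq \nabla_{\vtheta_t} \vf(\vx;\vtheta_t)$ and $\vTheta_t(\vx,\vx') = J_t(\vx) J_t(\vx')^\top$, so that the true dynamics read $\nabla_t \vf_t = -\eta \vTheta_t \nabla_{\vf_t} \Ls_t$ while the linearized dynamics read $\nabla_t \vf_t^{\text{lin}} = -\eta \vTheta_0 \nabla_{\vf_t^{\text{lin}}} \Ls_t^{\text{lin}}$ with the NTK frozen at its initial value. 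Because $\eta < \eta_{\text{critical}}$, the linearized MSE dynamics contract geometrically and $\vf_t^{\text{lin}}$ stays bounded uniformly in $t$, which will be the anchor of all subsequent bounds.

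The key ingredients I would need, and which I expect can be extracted from Theorem~\ref{th:gp-init} and Theorem~\ref{th:ntk-init} together with standard concentration arguments for wide networks, are: (i) a local Lipschitz bound $\|J(\vx;\vtheta) - J(\vx;\vtheta')\|_{\text{F}} \leq C k^{-1/2}\|\vtheta-\vtheta'\|_2$ in a ball of radius $O(1)$ around $\vtheta_0$ with probability arbitrarily close to $1$; (ii) operator-norm control $\|J(\vx;\vtheta_0)\|_{\text{op}} = O(1)$ and $\|\vTheta_0 - \vTheta_\infty\|_{\text{op}} = O(k^{-1/2})$; and (iii) a lower bound $\lambda_{\min}(\vTheta_t) \geq \tfrac{1}{2}\lambda_{\min}(\vTheta_0) > 0$ as long as the parameters remain in the aforementioned ball. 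Given (iii), the linearized loss decays like $\exp(-\eta \lambda_{\min}(\vTheta_0) t)$, which bounds $\int_0^\infty \|\nabla_t \vtheta_t^{\text{lin}}\|_2 \, dt$ by a constant independent of $k$, so $\vtheta_t^{\text{lin}}$ stays in a ball of radius $O(1)$ around $\vtheta_0$ for all $t \geq 0$.

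The heart of the proof is then a continuity (bootstrap) argument. Define the stopping time $\tau = \inf\{t : \|\vtheta_t - \vtheta_0\|_2 > R\}$ for a fixed $R$ inside the good ball. For $t \leq \tau$, items (i) and (iii) yield $\|\vTheta_t - \vTheta_0\|_{\text{op}} = O(k^{-1/2})$, which by Duhamel's principle applied to the difference of the two ODEs gives
\begin{equation*}
\|\vf_t - \vf_t^{\text{lin}}\|_2 \leq \eta \int_0^t \|\vTheta_s - \vTheta_0\|_{\text{op}} \|\nabla_{\vf_s}\Ls_s\|_2 \, ds + \eta \int_0^t \|\vTheta_0\|_{\text{op}} \|\nabla_{\vf_s}\Ls_s - \nabla_{\vf_s^{\text{lin}}}\Ls_s^{\text{lin}}\|_2 \, ds.
\end{equation*}
Using Lipschitzness of $\nabla \Ls$ (MSE is quadratic, so the second integrand is bounded by $\|\vf_s - \vf_s^{\text{lin}}\|_2$) and the geometric decay of $\|\nabla_{\vf_s}\Ls_s\|_2$ inherited from (iii), Gronwall yields $\sup_{t \leq \tau}\|\vf_t - \vf_t^{\text{lin}}\|_2 = O(k^{-1/2})$. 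Translating back through the Jacobian, this also gives $\|\vtheta_t - \vtheta_t^{\text{lin}}\|_2 = O(k^{-1/2})$, so $\vtheta_t$ stays strictly inside the good ball for $t \leq \tau$, which forces $\tau = \infty$ with high probability and closes the bootstrap.

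The main obstacle, and the step I would spend most effort on, is item (i), the $k^{-1/2}$ local Lipschitz estimate on the Jacobian, because it propagates through all later bounds and is where the wide-network scaling enters quantitatively; the other concentration facts follow from the same Gaussian-process structure already invoked in Theorems~\ref{th:gp-init} and~\ref{th:ntk-init}. A secondary technical subtlety is the uniformity in $t \geq 0$: one must ensure that the Gronwall factor does not blow up, which is exactly what the contraction furnished by $\eta < \eta_{\text{critical}}$ and a strictly positive $\lambda_{\min}(\vTheta_0)$ guarantees.
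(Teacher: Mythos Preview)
The paper does not prove this statement at all: Theorem~\ref{th:infinite-dynamics} is quoted verbatim from \citet{ntk-linear} and used as a black box in the proofs of Lemma~\ref{th:lemma} and Proposition~\ref{prop:bound-mse}. There is therefore no ``paper's own proof'' to compare your proposal against.

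That said, your sketch is essentially the argument given in the original source \citep{ntk-linear}: the three ingredients you isolate (a $k^{-1/2}$ local Lipschitz bound on the Jacobian near initialization, operator-norm concentration of $\vTheta_0$ around its deterministic limit, and a bootstrap/stopping-time argument that keeps the parameters inside the good ball via the geometric contraction furnished by $\eta<\eta_{\text{critical}}$) are precisely the structure of their proof, and your identification of item~(i) as the place where the $k^{-1/2}$ scaling enters quantitatively is correct. One minor imprecision: in the discrete gradient-descent case the Duhamel/Gronwall step is replaced by a discrete recursion, and the contraction rate comes from the spectral bound $\|I-\eta\vTheta_0\|_{\text{op}}<1$, but the logic is unchanged. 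If your goal was only to reproduce the present paper's argument, you can simply cite \citet{ntk-linear} as the authors do; if you intend to actually supply a proof, what you have is a faithful outline of the original.
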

\begin{remark}
\emph{
For the case of $L=2$, \citet{two-layer-analysis} have revealed that if any two input vectors of a dataset are not parallel, then $\lambda_{\min}(\vTheta) > 0$ holds, which fortunately can be well-satisfied for most real-world datasets. Though the training dynamics of DNNs only tend to be governed by their linearization at initialization when the infinite width is satisfied as revealed in \autoref{th:infinite-dynamics}, empirical results in \citep{ntk-linear} suggest that such linearization can also govern the training dynamics of practical over-parameterized DNNs.
}
\end{remark}

\subsection{Proof of~\autoref{prop:bound-mse}}\label{app_sec:proof-prop1}
With aforementioned theorems, especially \autoref{th:infinite-dynamics}, our \autoref{prop:bound-mse} can be proved as below with an introduced lemma (\autoref{th:lemma}).
\begin{lemma}\label{th:lemma}
    Let loss function $\Ls\left(\vf(\gX; \vtheta_t), \gY\right)$, abbreviated to $\Ls(\vf_t)$, be $\gamma$-Lipschitz continuous within the domain $\gV$. Under the condition in~\autoref{th:infinite-dynamics}, there exists a constant $c_0 > 0$ such that as $k \rightarrow \infty$,
    $$\left\|\Ls(\vf_t)- \Ls(\vf_t^{\text{\normalfont lin}})\right\|_2 \leq \frac{c_0\gamma}{\sqrt{k}}$$
    with probability arbitrarily close to 1.
\end{lemma}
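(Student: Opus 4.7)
The plan is a clean two-step reduction from the loss gap to the output gap. First, I would use the $\gamma$-Lipschitz continuity of $\Ls$ on $\gV$ to write, for any $\vu, \vv \in \gV$, the standard inequality $|\Ls(\vu) - \Ls(\vv)| \leq \gamma \|\vu - \vv\|_2$. Setting $\vu = \vf_t$ and $\vv = \vf_t^{\text{lin}}$ (both lying in $\gV$ under the conditions of the lemma) immediately gives
\[
\left\|\Ls(\vf_t) - \Ls(\vf_t^{\text{lin}})\right\|_2 \leq \gamma \left\|\vf_t - \vf_t^{\text{lin}}\right\|_2,
\]
which reduces the task to controlling the output-space gap.

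Second, I would invoke Theorem~\ref{th:infinite-dynamics} directly. Under the same hypotheses already in force for this lemma---namely $\lambda_{\min}(\vTheta) > 0$, $\eta < \eta_{\text{critical}}$, and $\|\vx\|_2 \leq 1$ for all $\vx \in \gX$---the theorem guarantees $\sup_{t \geq 0}\|\vf_t - \vf_t^{\text{lin}}\|_2 = \gO(k^{-1/2})$ as $k \to \infty$, with probability arbitrarily close to $1$ over the random initialization. Unpacking the big-$\gO$ notation yields an absolute constant $c_0 > 0$ (independent of $t$, $k$, and $\gamma$) and an event of probability arbitrarily close to $1$ on which $\|\vf_t - \vf_t^{\text{lin}}\|_2 \leq c_0/\sqrt{k}$ uniformly in $t$, for all sufficiently large $k$. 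Chaining this with the Lipschitz bound above then produces $\|\Ls(\vf_t) - \Ls(\vf_t^{\text{lin}})\|_2 \leq c_0\gamma/\sqrt{k}$, which is exactly the claim.

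The only genuine subtlety---and the closest thing to an obstacle---is verifying that both trajectories $\vf_t$ and $\vf_t^{\text{lin}}$ actually lie in the Lipschitz domain $\gV$ so that the first inequality may legitimately be applied pointwise in $t$. The linearized dynamics admit a closed-form exponential solution under the sub-critical learning rate condition and are therefore uniformly bounded in $t$; the true trajectory is $\gO(k^{-1/2})$-close to the linearized one by Theorem~\ref{th:infinite-dynamics}. Consequently, for any fixed neighborhood $\gV$ containing the range of the linearized flow, the true trajectory also lies in $\gV$ for sufficiently large $k$, on the same high-probability event invoked above. Aside from this domain bookkeeping there is no real work to do: the lemma is essentially Theorem~\ref{th:infinite-dynamics} translated from output space into loss space via Lipschitz continuity, and this is the role it will play in the proof of Proposition~\ref{prop:bound-mse}, where the MSE loss is Lipschitz on the bounded region $[0,1]^{mn}$ in which both $\vf_t$ and $\vf_t^{\text{lin}}$ are assumed to take values.
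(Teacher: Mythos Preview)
Your proposal is correct and matches the paper's proof essentially line for line: apply the $\gamma$-Lipschitz bound to pass from the loss gap to the output gap, then invoke Theorem~\ref{th:infinite-dynamics} to bound $\|\vf_t-\vf_t^{\text{lin}}\|_2$ by $c_0/\sqrt{k}$. Your extra paragraph on verifying $\vf_t,\vf_t^{\text{lin}}\in\gV$ is a point the paper glosses over but does not treat differently.
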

\begin{proof}
    Since $\Ls(\vf_t)$ is $\gamma$-Lipschitz continuous, for any $\vv, \vu \in \gV$, we have
    \begin{equation}
    \begin{gathered}
        \left\|\Ls(\vv) - \Ls(\vu)\right\|_2 \leq \gamma\left\|\vv - \vu\right\|_2 \label{eq:L-continuity}
    \end{gathered}
    \end{equation}
    following the definition of Lipschitz continuity. Besides, under the condition in~\autoref{th:infinite-dynamics}, \autoref{th:infinite-dynamics} reveals that there exists a constant $c_0$ such that as $k \rightarrow \infty$,
    \begin{equation}
    \begin{gathered}
        \left\|\vf_t-\vf^{\text{lin}}_t\right\|_2 \leq \frac{c_0}{\sqrt{k}} \label{eq:bounded-approx-error}
    \end{gathered}
    \end{equation}
    with probability arbitrarily close to 1. Combining \eqref{eq:L-continuity} and \eqref{eq:bounded-approx-error}, we hence can finish the proof by
    \begin{equation}
    \begin{gathered}
        \left\|\Ls(\vf_t)- \Ls(\vf_t^{\text{lin}})\right\|_2 \leq \gamma \left\|\vf_t-\vf^{\text{lin}}_t\right\|_2 \leq \frac{c_0\gamma}{\sqrt{k}} \quad \text{as} \; k \rightarrow \infty\ .
    \end{gathered}
    \end{equation}
\end{proof}
\begin{remark}
\emph{The $\gamma$-Lipschitz continuity based on $\|\cdot\|_2$ is commonly satisfied for widely adopted loss functions. For example, given 1-dimensional MSE $\Ls=m^{-1} \sum_i^m(x_i-y_i)^2$, let $x_i,y_i \in [0,1]$ denote the prediction and label respectively, the Lipschitz of MSE with respect to $\vx \triangleq (x_1, \cdots, x_m)^{\top}$ is $2$. Meanwhile, given the $n$-class Cross Entropy with Softmax $\Ls=-m^{-1} \sum_i^m\sum_j^n y_{i,j}\log(p_{i,j})$ as the loss function, let $p_{i,j} \in \left(0,1\right)$ and $y_{i,j} \in \{0,1\}$ denote the prediction and label correspondingly, with $\sum_j y_{i,j}=1$ and $p_{i,j}\triangleq \exp(x_{i,j})/\sum_j \exp(x_{i,j})$ for input $x_{i,j}\in \mathbb{R}$, the Lipschitz of Cross Entropy with Softmax with respect to $\vx \triangleq (x_{1,1}, \cdots, x_{i,j}, \cdots, x_{m,n})^{\top}$ is then $1$.}
\end{remark}

\paragraph{Proof of \autoref{prop:bound-mse}.} Note that the linearization $\vf^{\text{lin}}(\vx; \vtheta_t)$ achieves a constant NTK $\vTheta_t=\vTheta_0$ because its gradient with respect to $\vtheta_t$ (i.e., $\nabla_{\vtheta_0}\vf(\vx; \vtheta_0)$) stays constant over time. Given MSE as the loss function, according to the loss decomposition~\eqref{eq:loss-decomp} in Sec.~\ref{sec:ntk}, the training dynamics of $\vf^{\text{lin}}(\vx; \vtheta_t)$ can then be analyzed in a closed form:
\begin{equation}\label{eq:mse-decomp}
\begin{gathered}
    \Ls(\vf_t^{\text{lin}}) = \frac{1}{m}\sum_{i=1}^{mn}(1 - \eta \lambda_i)^{2t}(\vu_i^{\top}\mathcal{Y})^2  \ ,
\end{gathered}
\end{equation}
where $\vTheta_0 = \sum_{i=1}^{mn}\lambda_i(\vTheta_0)\vu_i\vu_i^{\top}$, and $\lambda_i(\vTheta_0)$ and $\vu_i$ denote the $i$-th largest eigenvalue and the corresponding eigenvector of $\vTheta_0$, respectively. With $\eta < \lambda_{\max}(\vTheta_0)^{-1}$ and $\lambda_{\min}(\vTheta_0) > 0$, $\eta\lambda_i(\vTheta_0)$ is then under the constraint that
\begin{equation}
\begin{gathered}
    0 < \eta \lambda_i(\vTheta_0) < \frac{\lambda_{\max}(\vTheta_0)}{\lambda_{\max}(\vTheta_0)} \Rightarrow 0 < \eta \lambda_i(\vTheta_0) < 1 \ .
\end{gathered}
\end{equation}
Hence, for the case of $t \geq 0.5$, with $0 < 1- \eta\lambda_i(\vTheta_0) < 1$ and $\overline{\lambda}(\vTheta_0) \triangleq (mn)^{-1}\sum^{mn}_{i=1}\lambda_i(\vTheta_0)$, 
\begin{equation}\label{eq:>0.5}
\begin{aligned}
    \sum_{i=1}^{mn}(1 - \eta\lambda_i(\vTheta_0))^{2t} &\leq \sum_{i=1}^{mn}(1 - \eta\lambda_i(\vTheta_0)) \\
    &= mn(1-\eta\overline{\lambda}(\vTheta_0)) \ .
\end{aligned}
\end{equation}
Further, given $0 \leq t < 0.5$, the scalar function $y=x^{2t}$ is concave for any $x \in \mathbb{R}_{\geq 0}$. Following from the Jensen's inequality on this concave function, 
\begin{equation}\label{eq:<=0.5}
\begin{aligned}
    \sum_{i=1}^{mn}(1 - \eta\lambda_i(\vTheta_0))^{2t} &\leq mn\left[\frac{1}{mn}\sum_{i=1}^{mn}(1 - \eta\lambda_i(\vTheta_0))\right]^{2t} \\
    &= mn(1 - \eta\overline{\lambda}(\vTheta_0))^{2t} \ .
\end{aligned}
\end{equation}
With bounded labels $\gY \in \left[0,1\right]^{mn}$ and the unit norm of eigenvectors $\|\vu_i\|_2 = 1$,
\begin{equation}\label{eq:eigenvector-bound}
\begin{aligned}
    (\vu_i^{\top}\mathcal{Y})^2 &\leq \|\vu_i\|_2^2\|\gY\|_2^2 \leq \|\gY\|_2^2 \leq mn \ .
\end{aligned}
\end{equation}

By introducing \eqref{eq:>0.5}, \eqref{eq:<=0.5} and \eqref{eq:eigenvector-bound} into \eqref{eq:mse-decomp}, the training loss $\Ls(\vf_t^{\text{lin}})$ can then be bounded by
\begin{equation}
\begin{aligned}
	\Ls(\vf_t^{\text{lin}}) &\leq (mn)\cdot\frac{1}{m}\sum_{i=1}^{mn}(1 - \lambda_i(\vTheta_0))^{2t}  \\
	&\leq mn^2(1-\eta\overline{\lambda}(\vTheta_0))^q \ ,
\end{aligned}
\end{equation}
where $q$ is set to be $2t$ if $0 \leq t < 0.5$, and 1 otherwise.

Following from \autoref{th:infinite-dynamics}, while applying gradient descent (or gradient flow) with learning rate $\eta < \lambda_{\max}^{-1} < \eta_{\text{critical}}\triangleq 2(\lambda_{\min}+\lambda_{\max})^{-1}$, for all $\vx \in \gX$ with $\|\vx\| \leq 1$, there exists a constant $c_0$ such that as $k \rightarrow \infty$,
\begin{equation}
\begin{gathered}
    \left\|\vf_t-\vf^{\text{lin}}_t\right\|_2 \leq \frac{c_0}{\sqrt{k}}
\end{gathered}
\end{equation}
with probability arbitrarily close to 1. Hence, $\vf_t^{\text{lin}} \in \left[-c_0 \sqrt{1/k}, 1+c_0 \sqrt{1/k}\right]^{mn}$ with $\vf_t \in \left[0,1\right]^{mn}$. Within the extended domain $\vf_t \in \left[-c_0 \sqrt{1/k}, 1+c_0 \sqrt{1/k}\right]^{mn}$ for the MSE loss function, 
\begin{equation}
\begin{aligned}
     \left\|\nabla_{\vf_t}\Ls\right\|_2 &= \frac{2}{m}\left\|\gY-\vf_t\right\|_2 \\
     &\leq 2\sqrt{n/m}\left(1+c_0\sqrt{1/k}\right) \ .
\end{aligned}
\end{equation}
Hence, the MSE within this extended domain is $2\sqrt{n/m}\left(1+c_0\sqrt{1/k}\right)$-Lipschitz continuous.

Combining with \autoref{th:lemma}, 
\begin{equation}
\begin{aligned}
    \Ls(\vf_t) &\leq \Ls(\vf_t^{\text{lin}}) + 2c_0\sqrt{n/(mk)}\left(1+c_0\sqrt{1/k}\right) \\
    & \leq mn^2(1-\eta\overline{\lambda})^q + 2c_0\sqrt{n/(mk)}\left(1+c_0\sqrt{1/k}\right) \ ,
\end{aligned}
\end{equation}
which concludes the proof by denoting $\Ls_t\triangleq\Ls(\vf_t)$.

\begin{remark}
\emph{
$\|\vx\|_2 \leq 1$ can be well-satisfied for the normalized dataset, which is conventionally adopted as data pre-processing in practice.
}
\end{remark}
\subsection{Proof of~\autoref{prop:data-agnostic}}\label{app_sec:proof-prop2}
According to~\autoref{th:ntk-init}, $\vTheta_0 \triangleq \vTheta^{(L)}$ is determined by both $\Sigma^{(L)}(\vx,\vx)$ and $\dot{\Sigma}^{(L)}(\vx,\vx)$ of all $\vx \in \gX$. We hence introduce \autoref{th:lemma2} below regarding $\Sigma^{(L)}(\vx,\vx)$ and $\dot{\Sigma}^{(L)}(\vx,\vx)$ to ease the proof of \autoref{prop:data-agnostic}. Particularly, we set $\beta=0$ for the $\beta$ in \autoref{th:gp-init} and \autoref{th:ntk-init} throughout our analysis. Note that this condition is usually satisfied by training DNNs without bias.
\begin{lemma}\label{th:lemma2}
    For a network of depth $L$ at initialization, with the $\gamma$-Lipschitz continuous nonlinearity $\sigma$ satisfying $|\sigma(x)| \leq |x|$ for all $x \in \mathbb{R}$, given the input features $\vx \in \gX$,
    \begin{equation*}
    \begin{aligned}
        \Sigma^{(L)}(\vx,\vx) \leq n_0^{-1}\vx^{\top}\vx \ , \quad
        \dot{\Sigma}^{(L)}(\vx,\vx) \leq \gamma^2 \ ,
    \end{aligned}
    \end{equation*}
    where $\Sigma^{(L)}(\vx,\vx)$ and $\dot{\Sigma}^{(L)}(\vx,\vx)$ are defined in \autoref{th:gp-init} and \autoref{th:ntk-init}, respectively, and $\beta=0$.
\end{lemma}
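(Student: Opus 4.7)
The plan is to prove the two bounds separately, both by exploiting the recursive definitions of $\Sigma^{(L)}$ and $\dot{\Sigma}^{(L)}$ from Theorems~\ref{th:gp-init} and \ref{th:ntk-init} together with the two structural assumptions on $\sigma$. Since $\beta = 0$ is fixed throughout, the recursions simplify to $\Sigma^{(1)}(\vx,\vx) = n_0^{-1}\vx^{\top}\vx$, $\Sigma^{(L)}(\vx,\vx) = \mathbb{E}_{g \sim \gN(0,\Sigma^{(L-1)})}[\sigma(g(\vx))^2]$, and $\dot{\Sigma}^{(L)}(\vx,\vx) = \mathbb{E}_{g \sim \gN(0,\Sigma^{(L-1)})}[\dot{\sigma}(g(\vx))^2]$.

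For the first bound, I would proceed by induction on the depth $L$. The base case $L=1$ holds with equality by the definition of $\Sigma^{(1)}$. For the inductive step, assume $\Sigma^{(L-1)}(\vx,\vx) \leq n_0^{-1}\vx^{\top}\vx$. Since $|\sigma(x)| \leq |x|$ implies $\sigma(g(\vx))^2 \leq g(\vx)^2$ pointwise, monotonicity of expectation gives
\begin{equation*}
\Sigma^{(L)}(\vx,\vx) \leq \mathbb{E}_{g \sim \gN(0, \Sigma^{(L-1)})}\left[g(\vx)^2\right] = \Sigma^{(L-1)}(\vx,\vx),
\end{equation*}
where the last equality uses the fact that $g(\vx)$ is centered Gaussian with variance $\Sigma^{(L-1)}(\vx,\vx)$. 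Chaining with the inductive hypothesis closes the induction.

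For the second bound, the key observation is that $\gamma$-Lipschitz continuity of $\sigma$ yields $|\dot{\sigma}(x)| \leq \gamma$ wherever the derivative exists (and almost everywhere in the weak sense for piecewise-smooth activations such as ReLU). Therefore $\dot{\sigma}(g(\vx))^2 \leq \gamma^2$ pointwise, and taking the Gaussian expectation that defines $\dot{\Sigma}^{(L)}(\vx,\vx)$ yields the desired bound $\dot{\Sigma}^{(L)}(\vx,\vx) \leq \gamma^2$ directly, without any induction.

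The argument is essentially a short symbolic manipulation rather than a deep fact, so I do not anticipate a genuine obstacle. The only mildly delicate point is the second claim when $\sigma$ is not everywhere differentiable (e.g.\ ReLU); here one must be slightly careful to interpret $\dot{\sigma}$ as an a.e.\ derivative bounded by the Lipschitz constant $\gamma$, so that the Gaussian expectation is still well-defined and bounded by $\gamma^2$. Apart from that technicality the proof is a two-line induction plus a pointwise bound.
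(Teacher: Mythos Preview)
Your proposal is correct and matches the paper's own proof essentially step for step: the paper shows $\Sigma^{(L)}(\vx,\vx)\le \Sigma^{(L-1)}(\vx,\vx)$ via $|\sigma(x)|\le|x|$ and the variance identity, then unrolls to layer~1, and bounds $\dot{\Sigma}^{(L)}(\vx,\vx)$ directly by $\gamma^2$ using the Lipschitz constant pointwise inside the expectation. Your added remark on the a.e.\ interpretation of $\dot{\sigma}$ for non-smooth activations is a reasonable clarification that the paper leaves implicit.
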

\begin{proof}
Following from the definition in \autoref{th:gp-init},
\begin{equation}\label{eq:Sigma}
\begin{aligned}
    \Sigma^{(L)}(\vx,\vx) &= \mathbb{E}_{g \sim \gN(0, \Sigma^{(L-1)})}\left[ \sigma(g(\vx))\sigma(g(\vx))\right] \\
    &\stackrel{(a)}{\leq} \mathbb{E}_{g \sim \gN(0, \Sigma^{(L-1)})}\left[g^2(\vx)\right] \\
    &\stackrel{(b)}{=}\Sigma^{(L-1)}(\vx, \vx) \ ,
\end{aligned}
\end{equation}
in which (a) follows from $|\sigma(x)| \leq |x|$ and (b) results from the variance of $g \sim \gN(0, \Sigma^{(L-1)})$. By following \eqref{eq:Sigma} from layer $L$ to 1, we get
\begin{equation}
\begin{aligned}
    \Sigma^{(L)}(\vx,\vx) &\leq \Sigma^{(1)}(\vx, \vx) \stackrel{(a)}{=} n_0^{-1}\vx^{\top}\vx \ ,
\end{aligned}
\end{equation}
where (a) is defined in \autoref{th:gp-init}.

Similarly, following from the definition in \autoref{th:ntk-init},
\begin{equation}
\begin{aligned}
    \dot{\Sigma}^{(L)}(\vx,\vx) &= \mathbb{E}_{g \sim \gN(0, \Sigma^{(L-1)})}\left[ \dot{\sigma}(g(\vx))\dot{\sigma}(g(\vx))\right] \\
    &\stackrel{(a)}{\leq} \mathbb{E}_{g \sim \gN(0, \Sigma^{(L-1)})}\left[\gamma^2\right] \\
    &\stackrel{(b)}{=} \gamma^2 \ ,
\end{aligned}
\end{equation}
in which (a) results from the $\gamma$-Lipschitz continuity of nonlinearity and (b) follows from the expectation of a constant.
\end{proof}

\paragraph{Proof of \autoref{prop:data-agnostic}.} Notably, \autoref{th:ntk-init} reveals that in the limit as $n_1, \cdots, n_{L-1} \rightarrow \infty$ sequentially, $\vTheta^{(L)} \rightarrow \vTheta_{\infty}^{(L)} \otimes \mathbb{I}_{n_L}$ with probability arbitrarily close to 1 over random initializations. We therefore only need to focus on this deterministic limiting kernel to simplify our analysis, i.e., let $\vTheta_0=\vTheta_{\infty}^{(L)} \otimes \mathbb{I}_{n_L}$. Particularly, given $m$ input vectors $\vx \sim P(\vx)$ with covariance matrix $\Sigma_P$ and a $L$-layer neural architecture of $n$-dimensional output, For $\gamma \neq 1$, we have
\begin{equation}\label{eq:trace-norm-bound}
\begin{aligned}
    (mn)^{-1}\|\vTheta_0\|_\text{tr} &= (mn)^{-1}\|\vTheta_{\infty}^{(L)} \otimes \mathbb{I}_{n_L}\|_\text{tr} \\
    &\stackrel{(a)}{=} (mn)^{-1}\|\vTheta_{\infty}^{(L)}\|_{\text{tr}}\|\mathbb{I}_{n_L}\|_{\text{tr}} \\
    &\stackrel{(b)}{=} m^{-1}\|\vTheta_{\infty}^{(L)}\|_\text{tr}\\
    &\stackrel{(c)}{=}\E_{\vx \sim P(\vx)}\left[\vTheta_{\infty}^{(L)}(\vx,\vx)\right] \\
    &\stackrel{(d)}{=} \E_{\vx \sim P(\vx)}\left[\vTheta_{\infty}^{(L-1)}(\vx,\vx)\dot{\Sigma}^{(L)}(\vx,\vx) + \Sigma^{(L)}(\vx,\vx)\right] \\
    &\stackrel{(e)}{\leq}\E_{\vx \sim P(\vx)}\left[\gamma^2\vTheta_{\infty}^{(L-1)}(\vx,\vx) + n_0^{-1} \vx^{\top}\vx \right] \\
    &\stackrel{(f)}{=} \E_{\vx \sim P(\vx)}\left[\gamma^2\vTheta_{\infty}^{(L-1)}(\vx,\vx)\right] + n_0^{-1}\|\Sigma_P\|_{\text{tr}} \ ,
\end{aligned}
\end{equation}
in which (a) derives from the property of Kronecker product, (b) follows from the notation $n_L=n$ and (c) results from the property of expectation and trace norm. In addition, (d) follows from the definition of $\vTheta_{\infty}^{(L)}(\vx,\vx)$ in \autoref{th:ntk-init} and (e) is derived by introducing \autoref{th:lemma2} into (d). Finally, (f) follows from the expectation and variance of $P(\vx)$. By following (c-f) from layer $L$ to 1, we can get
\begin{equation}\label{eq:!=1}
\begin{aligned}
    (mn)^{-1}\|\vTheta_0\|_\text{tr} &\leq \gamma^{2(L-1)}\E_{\vx \sim P(\vx)}\left[\Sigma^{(1)}(\vx,\vx)\right] + \\
    & \quad \ n_0^{-1}\|\Sigma_P\|_{\text{tr}}\left(1 - \gamma^{2(L-1)}\right)\left(1 - \gamma^2\right)^{-1} \\
    &= n_0^{-1}\|\Sigma_P\|_{\text{tr}}\left(1 - \gamma^{2 L}\right)\left(1 - \gamma^2\right)^{-1} \ .
\end{aligned}
\end{equation}

Notably, \eqref{eq:!=1} is derived under the condition that $\gamma \neq 1$. For the case of $\gamma=1$, similarly, we can get
\begin{equation}\label{eq:=1}
\begin{aligned}
    (mn)^{-1}\|\vTheta_0\|_\text{tr} \leq n_0^{-1}L\|\Sigma_P\|_{\text{tr}} \ .
\end{aligned}
\end{equation}

Note that with $\|\vx\|_2 \leq 1$, we have
\begin{equation}\label{eq:var-bound}
\begin{aligned}
    \|\Sigma_{P}\|_{\text{tr}} &= \E_{\vx \sim P(\vx)}\left[\vx^{\top}\vx\right] - \E_{\vx \sim P(\vx)}\left[\vx\right]^{\top}\E_{\vx \sim P(\vx)}\left[\vx\right] \\
    & \leq \E_{\vx \sim P(\vx)}\left[\vx^{\top}\vx\right] \\
    & \leq 1 \ .
\end{aligned}
\end{equation}

Given any two different distributions $P(\vx)$ and $Q(\vx)$, define $Z=\int \|P(\vx)-Q(\vx)\|\,\text{d}\vx$,\footnote{We abuse this integration to ease notations.} we can construct a special probability density function $\widetilde{P}(\vx)=Z^{-1}\|P(\vx)-Q(\vx)\|$ to further employee the bounds in \eqref{eq:!=1} and \eqref{eq:=1}. Specifically, for $\gamma \neq 1$, by combining \eqref{eq:!=1} and \eqref{eq:var-bound}, we can get
\begin{equation}
\begin{aligned}
    (mn)^{-1}\left|\left\|\vTheta_0(P)\|_\text{tr} - \|\vTheta_0(Q)\right\|_\text{tr}\right| &\stackrel{(a)}{=}
    \Bigg|\E_{\vx \sim P(\vx)}\left[\vTheta_{\infty}^{(L)}(\vx,\vx)\right] - \E_{\vx \sim Q(\vx)}\left[\vTheta_{\infty}^{(L)}(\vx,\vx)\right]\Bigg| \\
    &\stackrel{(b)}{\leq} \int \|P(\vx)-Q(\vx)\|\vTheta_{\infty}^{(L)}(\vx,\vx)\, d\vx \\
    &\stackrel{(c)}{\leq} n_0^{-1}Z\|\Sigma_P\|_{\text{tr}}\left(1 - \gamma^{2 L}\right)\left(1 - \gamma^2\right)^{-1} \\
    &\stackrel{(d)}{\leq} n_0^{-1}Z\left(1 - \gamma^{2 L}\right)\left(1 - \gamma^2\right)^{-1} \ ,
\end{aligned}
\end{equation}
in which (a) follows from (c) in \eqref{eq:trace-norm-bound}, (b) results from Cauchy Schwarz inequality and (c) is obtained by introducing inequality \eqref{eq:!=1} and distribution $\widetilde{P}(\vx)$. Finally, (d) is based on \eqref{eq:var-bound}.

Similarly, in the case of $\gamma=1$, we can get
\begin{equation}
\begin{aligned}
    (mn)^{-1}\Big|\left\|\vTheta_0(P)\|_\text{tr} - \|\vTheta_0(Q)\right\|_\text{tr}\Big| \leq n_0^{-1}ZL \ ,
\end{aligned}
\end{equation}
which concludes the proof.

\begin{remark}
\emph{Note that ReLU is widely adopted as the nonlinearity in neural networks, which satisfies the Lipschitz continuity with $\gamma=1$ and the inequality $|\sigma(x)| \leq |x|$. Notably, many other ReLU-type nonlinearities (e.g., Leaky ReLU~\citep{leakyrelu} and PReLU~\citep{prelu}) can also satisfy these two conditions.}
\end{remark}

\section{Experimental Settings}\label{app_sec:settings}
\subsection{Determination of Constraint $\nu$ and Penalty Coefficient $\mu$}\label{app_sec:determine}

As demonstrated in Sec.~\ref{sec:reformulate}, constraint $\nu$ (derived from $\|\vTheta_0(\gA)\|_{\text{tr}} < mn\eta^{-1}$ in \eqref{eq:nas-ntk}) introduces a trade-off between the complexity of final selected architectures and the optimization behavior in their model training. This constraint $\nu$ hence is of great importance to our NASI algorithm. Though $\nu$ has already been pre-defined as $\nu \triangleq \gamma n\eta^{-1}$ in Sec.~\ref{sec:optimization}, we still tend to take it as a hyper-parameter to be determined for the selection of best-performing architectures in practice. Specifically, in this paper, two methods are adopted to determine $\nu$ during the search progress: The \textbf{fixed} and \textbf{adaptive} method shown as below. Notably, the final architectures selected by NASI via the \textbf{fixed} and \textbf{adaptive} method are called NASI-FIX and NASI-ADA, respectively. Note that our experiments in the main text suggest that both methods can select architectures with competitive generalization performance over different tasks.

\paragraph{The fixed determination.} We initialize and fix $\nu$ with $\nu_0$ during the whole search process. Hence, $\nu_0$ is required to provide a good trade-off between the complexity of architectures and their optimization behavior in the search space. Intuitively, the expectation of architecture complexity in the search space can help to select architectures with medium complexity and hence implicitly achieve a good trade-off between the complexity of architectures and their optimization behavior. Specifically, we randomly sample $N=50$ architectures in the search space (i.e., $\gA_1, \cdots, \gA_N$), and then determine $\nu_0$ before the search process by
\begin{equation}
\begin{aligned}
  \nu = \nu_0 = N^{-1}\sum_i^N \|\widetilde{\vTheta}_0(\gA_i)\|_\text{tr} \ .
\end{aligned}
\end{equation}
Note that we can further enlarge $\nu_0$ in practice to encourage the selection of architectures with larger complexity.

\paragraph{The adaptive determination.} We initialize $\nu$ with a relatively large $\nu_0$ and then adaptively update it with the expected $\|\widetilde{\vTheta}_0(\gA)\|_\text{tr}$ of sampled architectures during the search process. Specifically, with sampled architectures $\gA_1, \cdots, \gA_t$ in the history, $\nu$ at time $t$ (i.e., $\nu_t$) during the search process is given by
\begin{equation}
\begin{aligned}
    \nu_t = t^{-1}\left(\nu_0 + \sum_{\tau=1}^{t-1} \|\widetilde{\vTheta}_0(\gA_{\tau})\|_\text{tr}\right) \ .
\end{aligned}
\end{equation}
We apply a relatively large $\nu_0$ to ensure a loose constraint on the complexity of architectures in the first several steps of the search process. Note that the adaptive determination provides a more accurate approximation of the expected complexity of architectures in the search space than the fixed determination method if more architectures are sampled to update $\nu_t$, i.e., $t > N$. 

Note that \eqref{eq:nas-final} in Sec.~\ref{sec:optimization} further reveals the limitation on the complexity of final selected architectures by the penalty coefficient $\mu$. Particularly, $\mu{=}0$ indicates no limitation on the complexity of architectures. Following from the introduced trade-off between the complexity of final selected architectures and the optimization behavior in their model training by the constraint $\nu$, for a search space with relatively larger-complexity architectures, a larger penalty coefficient $\mu$ (i.e., $\mu
=2$) is preferred to search for architectures with relatively smaller complexity to ensure a well-behaved optimization with a larger learning rate $\eta$. On the contrary, for a search space with relatively smaller-complexity architectures, a lower penalty coefficient $\mu$ (i.e., $\mu{=}1$) is adopted to ensure the complexity and hence the  representation power of the final selected architectures. Appendix \ref{app_sec:ablation} provides further ablation study on the constraint $\nu$ and the penalty coefficient $\mu$.

\subsection{The DARTS Search Space}\label{app_sec:search-space}
Following from the DARTS~\citep{darts} search space, candidate architecture in our search space comprise a stack of $L$ cells, and each cell can be represented as a directed acyclic graph (DAG) of $N$ nodes denoted by $\{x_0, x_1, \dots, x_{N-1}\}$. To select the best-performing architectures, we instead need to select their corresponding cells, including the normal and reduction cell.
Specifically, $x_0$ and $x_1$ denote the input nodes, which are the output of two preceding cells, and the output $x_N$ of a cell is the concatenation of all intermediate nodes from $x_2$ to $x_{N-1}$.
Following that of SNAS~\citep{snas}, by introducing the distribution of architecture in the search space (i.e.,  $p_{\valpha}(\gA)$ in \eqref{eq:nas-prob}), each intermediate nodes $x_i\ (2\le i \le N-1)$ denotes the output of a single sampled operation $o_i \sim p_i(o)$ with $\sum_{o \in \gO}p_i(o)=1$ given a single sampled input $x_j \sim p_i(x)$ with $j \in {0, \cdots, i-1}$ and $\sum_{j=0}^{i-1}p_i(x_j)=1$, where $\gO$ is a predefined operation set.
After the search process, only operations achieving the top-2 largest $p_i(o)$ and inputs achieving the top-2 largest $p_i(x)$ for node $x_i$ are retained. Each intermediate node $x_j$ hence connects to two preceding nodes with the corresponding selected operations.

Following from DARTS, the candidate operation set $\gO$ includes following operations: $3 \times 3$ max pooling, $3 \times 3$ avg pooling, identity, $3 \times 3$ separable conv, $5 \times 5$ separable conv, $3 \times 3$ dilated separable conv, $5 \times 5$ dilated separable conv.
Note that our search space has been modified slightly based on the standard DARTS~\citep{darts} search space: (a) Operation \textit{zero} is removed from the candidate operation set in our search space since it can never been selected in the standard DARTS search space, and (b) the inputs of each intermediate node are selected independently from the selection of operations, while DARTS attempts to select the inputs of intermediate nodes by selecting their coupling operations with the largest weights. Notably, following from DARTS, we need to search for two different cells: A normal cell and a reduction cell. Besides, a max-pooling operation in between normal and reduction cell is applied to down-sampling intermediate features during the search process inspired by NAS-Bench-1Shot1~\citep{nasbench1shot1}.

\subsection{Sampling Architecture with Gumbel-Softmax}\label{app_sec:gumbel}

Notably, aforementioned $p_i(o)$ and $p_i(x)$ for the node $x_i$ follow the categorical distributions. To relax the optimization of~\eqref{eq:nas-prob} with such categorical distributions to be continuous and differentiable, Gumbel-Softmax~\citep{gumbel-softmax, concrete} is applied. Specifically, supposing $p_i(o)$ and $p_i(x)$ are parameterized by $\alpha_{i,j}^{o}$ and $\alpha_{i,k}^{x}$ respectively, with Straight-Through (ST) Gumbel Estimator~\citep{gumbel-softmax}, we can sample the single operation and input for the node $x_i\ (2\le i \le N-1)$ during the search process by
\begin{equation}\label{eq:determine-arch}
\begin{aligned}
    j^*&=\argmax_j \frac{\exp((\alpha_{i,j}^o + g_{i,j}^o)/\tau)}{\sum_j \exp((\alpha_{i,j}^{o} + g_{i,j}^{o})/\tau)} \\
    k^*&=\argmax_k \frac{\exp((\alpha_{i,k}^x + g_{i,k}^x)/\tau)}{\sum_k \exp((\alpha_{i,k}^{x} + g_{i,k}^{x})/\tau)} \ ,
\end{aligned}
\end{equation}
where $g_{i,j}^o$ and $g_{i,k}^x$ are sampled from Gumbel(0, 1)
and $\tau$ denotes the softmax temperature, which is conventionally set to be 1 in our experiments. Note that in \eqref{eq:determine-arch}, $j \in \left[0, \cdots, |\gO|-1\right]$ and $k \in \left[0, \cdots, i-1\right]$, which correspond to the $|\gO|$ operations in the operation set $\gO$ and the $i$ candidate inputs for the node $x_i$, respectively. Notably, the gradient through ST can be approximated by its continuous counterpart as suggested in~\citep{gumbel-softmax}, thus allowing the continuous and differentiable optimization of \eqref{eq:nas-prob}. By sampling the discrete operation and input with \eqref{eq:determine-arch} for each nodes in the cells, the final sampled architecture can hence be determined.

\subsection{Evaluation on CIFAR-10/100 and ImageNet}\label{app_sec:evaluation}
\paragraph{Evaluation on CIFAR-10/100.} Following DARTS~\citep{darts}, the final selected architectures consist of 20 searched cells: 18 of them are identical normal cell and 2 of them are identical reduction cell. An auxiliary tower with weight 0.4 is located at 13-th cell of the final selected architectures and the number of initial channels is set to be 36. The final selected architecture are then trained via stochastic gradient descent (SGD) of 600 epochs with a learning rate cosine scheduled from 0.025 to 0, momentum 0.9, weight decay $3\times10^{-4}$ and batch size 96 on a single Nvidia 2080Ti GPU. Cutout~\citep{cutout}, and ScheduledDropPath linearly increased from 0 to 0.2 are also employed for regularization purpose.

\paragraph{Evaluation on ImageNet.}

We evaluate the transferability of the selected architectures from CIFAR-10 to ImageNet. The architecture comprises of 14 cells (12 normal cells and 2 reduction cells). To evaluate in the mobile setting (under 600M multiply-add operations), the number of initial channels is set to 46. We adopt conventional training enhancements~\citep{darts,p-darts,sdarts} include an auxiliary tower loss of weight 0.4 and label smoothing. Following P-DARTS~\citep{p-darts} and SDARTS-ADV~\citep{sdarts}, we train the model from scratch for 250 epochs with a batch size of 1024 on 8 Nvidia 2080Ti GPUs. The learning rate is warmed up to 0.5 for the first 5 epoch and then decreased to zero linearly. We adopt the SGD optimizer with 0.9 momentum and a weight decay of $3\times10^{-5}$ for the model training on ImageNet.

\section{More Results}
\subsection{Trade-off Between Model Complexity and Optimization Behaviour}\label{app_sec:trade-off}

In this section, we empirically validate the existence of trade-off between model complexity of selected architectures and the optimization behavior in their model training for our reformulated NAS detailed in Sec.~\ref{sec:reformulate}.

\paragraph{The trade-off in NAS-Bench-1Shot1.}
Specifically, Figure~\ref{app_fig:acc-norm} illustrates the relation between test error and approximated $\|\vTheta_0(\gA)\|_\text{tr}$ of candidate architectures in the three search spaces of NAS-Bench-1Shot1. Note that with the increasing of the approximated $\|\vTheta_0(\gA)\|_\text{tr}$, the test error decreases rapidly to a minimum and then increase gradually, which implies that architectures only with certain $\|\vTheta_0(\gA)\|_\text{tr}$ (or with desirable complexity instead of the largest complexity) can achieve the best generalization performance. These results hence validate the existence of trade-off between the model complexity and the generalization performance of selected architectures. Note that such trade-off usually results from different optimization behavior in the model training of those selected architectures as demonstrated in the following experiments.

\begin{figure*}
\centering
\includegraphics[width=0.9\textwidth]{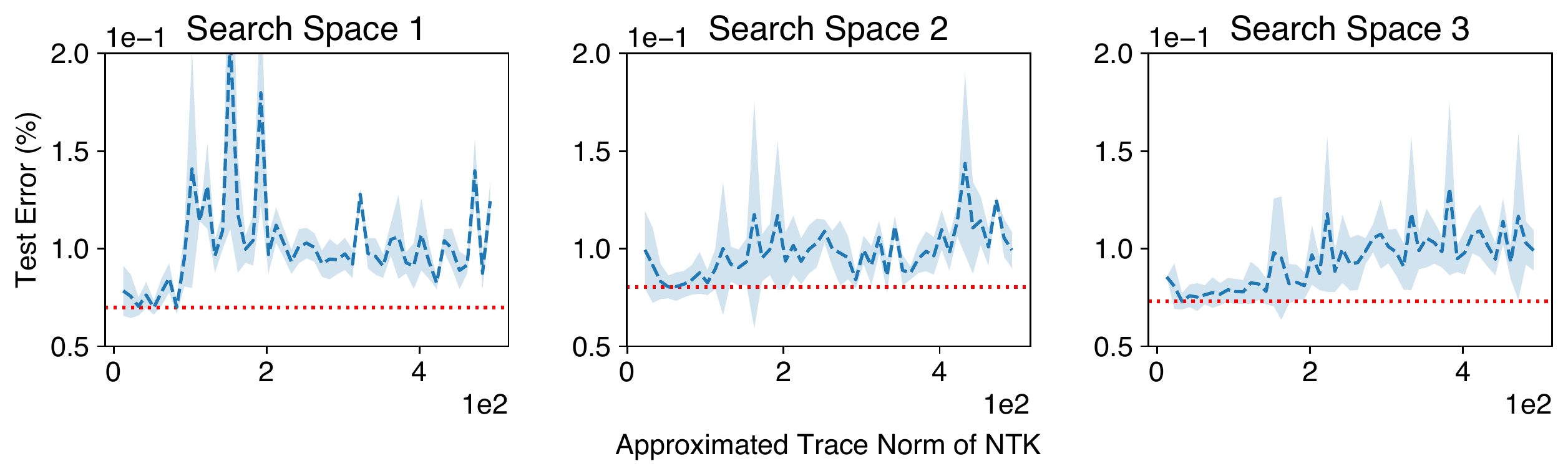}
\caption{The relation between test error and approximated $\|\vTheta_0(\gA)\|_\text{tr}$ of candidate architectures in the three search spaces of NAS-Bench-1Shot1 over CIFAR-10. Note that $x$-axis denotes the approximated $\|\vTheta_0(\gA)\|_\text{tr}$, which is averaged over the architectures grouped in the same bin based on their approximated $\|\vTheta_0(\gA)\|_\text{tr}$. Correspondingly, $y$-axis denotes the averaged test error with standard deviation (scaled by 0.5) of these grouped architectures. In addition, the red lines demonstrate the smallest test errors achieved by candidate architectures in these three search spaces.}
\label{app_fig:acc-norm}
\end{figure*}

\paragraph{The trade-off in the DARTS search space.}
We then illustrate the optimization behavior of the final selected architecture from the DARTS search space with different constraint and penalty coefficient in Figure~\ref{app_fig:optimization} to further confirm the existence of such trade-off in our reformulated NAS~\eqref{eq:nas-ntk}. Specifically, we apply NASI with the fixed determination of $\nu$ to select architectures in the the DARTS search space over CIFAR-10, where $\nu_0$ in the fixed determination method introduced in Appendix~\ref{app_sec:determine} is modified manually for the comparison. Besides, a search budget of $T=100$ and batch size of $b=64$ are adopted. These final selected architectures are then trained on CIFAR-10 following Appendix~\ref{sec:evaluation}. 

Note that, in Figure~\ref{app_fig:optimization}(a), with the increasing of $\nu$, the final selected architectures by NASI contain more parameters and hence achieve larger complexity. Meanwhile, the final selected architecture with $\nu{=}10$ enjoys a faster convergence rate in the first 50 epochs, but a poorer generalization performance than the one with $\nu{=200}$. Interestingly, the final selected architecture with $\nu{=}100$ realizes the fastest convergence and the best generalization performance by achieving a proper complexity of final selected architectures. These results validate the existence and also the importance of the trade-off between the complexity of final selected architectures and the optimization behavior in their model training. However, the trade-off introduced by the penalty coefficient $\mu$ shown in Figure~\ref{app_fig:optimization}(b) is hard to be observed, which indicates that the constraint $\nu$ is of greater importance than the penalty coefficient $\mu$ in terms of the trade-off between the complexity of architectures and their optimization behavior. Interestingly, Figure~\ref{app_fig:optimization}(b) can still reveal the slower convergence rate and poorer generalization performance caused by the selection of architecture with relatively larger complexity, i.e., $\mu{=}1$.

\begin{figure}[t]
\centering
\begin{tabular}{cc}
    \hspace{-4mm}\includegraphics[width=0.35\columnwidth]{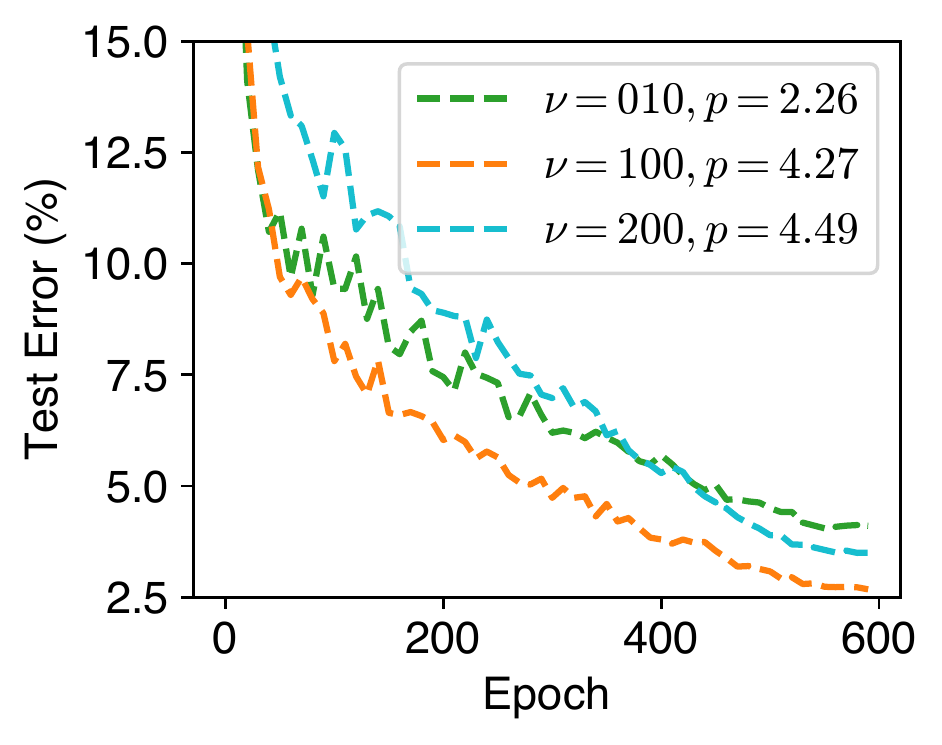} & \hspace{-4mm} \includegraphics[width=0.35\columnwidth]{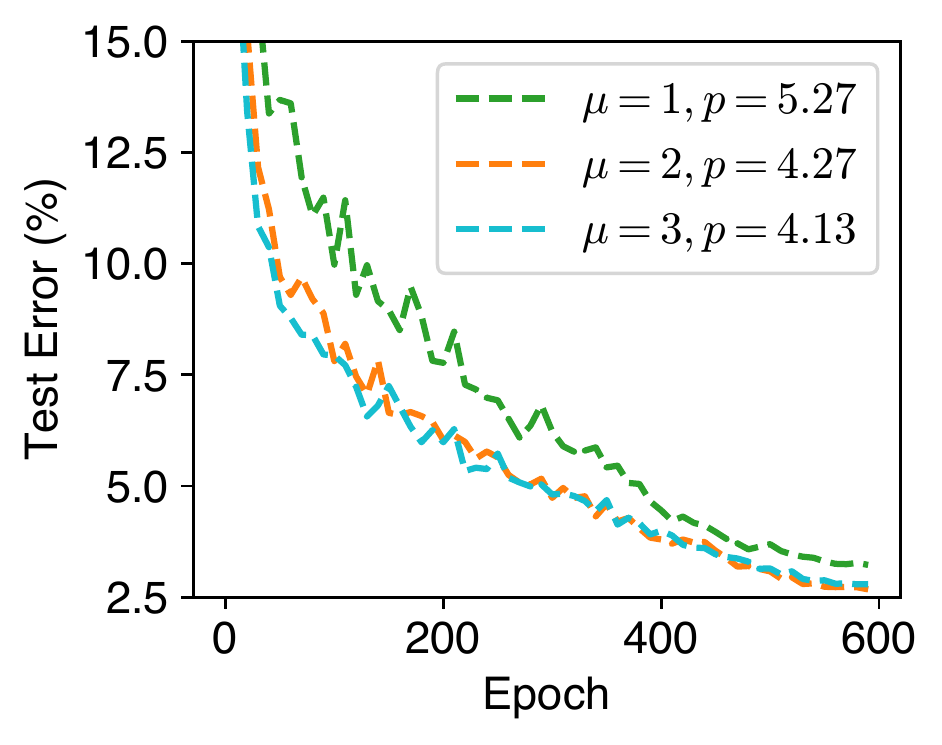} \\
    \hspace{-4mm}{(a) Constraint $\nu$} & {(b) Penalty coefficient $\mu$}
\end{tabular}
\caption{The optimization behavior (test error on CIFAR-10 in model training) of the final selected architectures with different constraint $\nu$ and penalty coefficient $\mu$. The model parameter (MB) denoted by $p$ of each architecture is given in the top-right corner.}
\label{app_fig:optimization}
\end{figure}

\subsection{Comparison to Other Training-Free Metrics}
We compare our methods with other training-free NAS methods using both the Spearman correlation and the Kendall's tau between the training-free metrics and the test accuracy on CIFAR-10 in the three search spaces of NAS-Bench-1Shot1. We adopt one uniformly randomly sampled mini-batch data to evaluate these two correlations and apply the same implementations of these training-free NAS methods in~\citep{zero-cost} for the comparison. Table \ref{tab:compare-training-free} summarizes the comparison, where the results of our metric are reported under the constraint in \eqref{eq:nas-ntk}. Interestingly, our metric generally achieves a higher positive correlation than other training-free metrics, which confirms the reasonableness and also the effectiveness of our training-free metric.

\begin{table}
\renewcommand\multirowsetup{\centering}
\caption{Comparison of the Spearman correlation and the Kendall's tau for various training-free metrics in the three search spaces of NAS-Bench-1Shot1 on CIFAR-10.}
\label{tab:compare-training-free}
\centering
\resizebox{0.88\columnwidth}{!}{
\begin{tabular}{lcccccc}
\toprule
\multirow{2}{*}{\textbf{Methods}} & \multicolumn{3}{c}{\textbf{Spearman Correlation}} &
\multicolumn{3}{c}{\textbf{Kendall's Tau}} \\
\cmidrule(l){2-4} \cmidrule(l){5-7} 
& \textbf{S1} & \textbf{S2} & \textbf{S3} & \textbf{S1} & \textbf{S2} & \textbf{S3} \\
\midrule 
SNIP~\citep{snip} & $-$0.49 & $-$0.62 & $-$0.79 & $-$0.39 & $-$0.49 & $-$0.63 \\
GraSP~\citep{ticket-w/o-training} & $\ms$0.41 & $\ms$0.54 & $\ms$0.17 & $\ms$0.33 & $\ms$0.42 & $\ms$0.15 \\
SynFlow~\citep{synflow} & $-$0.52 & $-$0.45 & $-$0.53 & $-$0.42 & $-$0.40 & $-$0.47 \\
NASWOT~\citep{nas-notrain} & $\ms$0.21 & $\ms$0.32 & $\ms$0.54 & $\ms$0.16 & $\ms$0.24 & $\ms$0.44\\
\midrule
NASI (conditioned) & $\ms$\textbf{0.62} & $\ms$\textbf{0.74} & $\ms$\textbf{0.76} & $\ms$\textbf{0.44} & $\ms$\textbf{0.53} & $\ms$\textbf{0.53} \\
\bottomrule
\end{tabular}
}
\end{table}

\subsection{Search in NAS-Bench-201}
To further justify the improved search efficiency and competitive search effectiveness of our NASI algorithm, we also compare it with other state-of-the-art NAS algorithms in NAS-Bench-201 \citep{nasbench201} on CIFAR-10/100 (C10/100) and ImageNet-16-200 (IN-16-200) \footnote{ImageNet-16-200 is a down-sampled variant of ImageNet (ImageNet16×16) \citep{imagenet-16-120}}. Table \ref{tab:sota-nasbench201} summarizes the comparison. Note that the baselines in Table \ref{tab:sota-nasbench201} are obtained from TE-NAS \citep{te-nas} paper. Notably, compared with training-based NAS algorithms, our NASI algorithm can achieve significantly improved search efficiency while maintaining a competitive or even outperforming test performance. Furthermore, our NASI algorithm is shown to be able to enjoy both improved search efficiency and effectiveness when compared with most other training-free baselines. Although TE-NAS, as the best-performing training-free NAS algorithm on both CIFAR-10/100, achieves a relatively improved test accuracy than our NASI ($T$), our NASI with a search budget of $T=30s$ is 50$\times$ more efficient than TE-NAS and is able to achieve compelling test performance on all the three datasets. Moreover, by providing a larger search budget, our NASI algorithm (i.e., NASI (4$T$)) can in fact achieve comparable (on CIFAR-10/100) or even better (on ImageNet-16-120) search results with 12$\times$ lower search cost compared with TE-NAS.

\begin{table*}[t]
\caption{The comparison among state-of-the-art (SOTA) NAS algorithms on NAS-Bench-201. The performance of the final architectures selected by NASI is reported with the mean and standard deviation of four independent trials. The search costs are evaluated on a single Nvidia 1080Ti.}
\label{tab:sota-nasbench201}
\centering
\resizebox{\textwidth}{!}{
\begin{threeparttable}
\begin{tabular}{lcccccc}
\toprule
\multirow{2}{*}{\textbf{Architecture}} & \multicolumn{3}{c}{\textbf{Test Accuracy (\%)}} &
\multirow{2}{2cm}{\textbf{Search Cost  (GPU Sec.)}} &
\multirow{2}{*}{\textbf{Search Method}} \\
\cmidrule(l){2-4} 
& \textbf{C10} & \textbf{C100} & \textbf{IN-16-120} & & \\
\midrule 
ResNet \citep{resnet} & 93.97 & 70.86 & 43.63 & - & - \\
\midrule 
ENAS \citep{enas} & 54.30 & 15.61 & 16.32 & 13315 & RL\\
DARTS (1st) \citep{darts} & 54.30 & 15.61 & 16.32 & 10890 & gradient \\
DARTS (2nd) \citep{darts} & 54.30 & 15.61 & 16.32 & 29902 & gradient \\
GDAS \citep{gdas} & 93.61$\pm$0.09 & 70.70$\pm$0.30 & 41.84$\pm$0.90 & 28926 & gradient \\
\midrule
NASWOT (N=10)  \citep{nas-notrain} & 92.44$\pm$1.13 & 68.62$\pm$2.04 & 41.31$\pm$4.11 & 3 & training-free \\
NASWOT (N=100) \citep{nas-notrain} & 92.81$\pm$0.99 &  69.48$\pm$1.70 & 43.10$\pm$3.16 & 30 & training-free \\
NASWOT (N=1000) \citep{nas-notrain} &  92.96$\pm$0.81 &  69.98$\pm$1.22 &  44.44$\pm$2.10 & 306 & training-free \\
TE-NAS \citep{te-nas} & 93.90$\pm$0.47 & 71.24$\pm$0.56 & 42.38$\pm$0.46 & 1558 & training-free \\
KNAS \citep{knas} & 93.05 & 68.91 & 34.11 & 4200 & training-free \\
\midrule
NASI ($T$) & 93.08$\pm$0.24 & 69.51$\pm$0.59 & 40.87$\pm$0.85 & 30 & training-free \\
NASI ($4T$) & 93.55$\pm$0.10 & 71.20$\pm$0.14 & 44.84$\pm$1.41 & 120 & training-free \\
\bottomrule
\end{tabular}
\end{threeparttable}
} 
\end{table*}

\begin{figure}
\centering
\begin{tabular}{cc}
    \includegraphics[width=0.51\columnwidth]{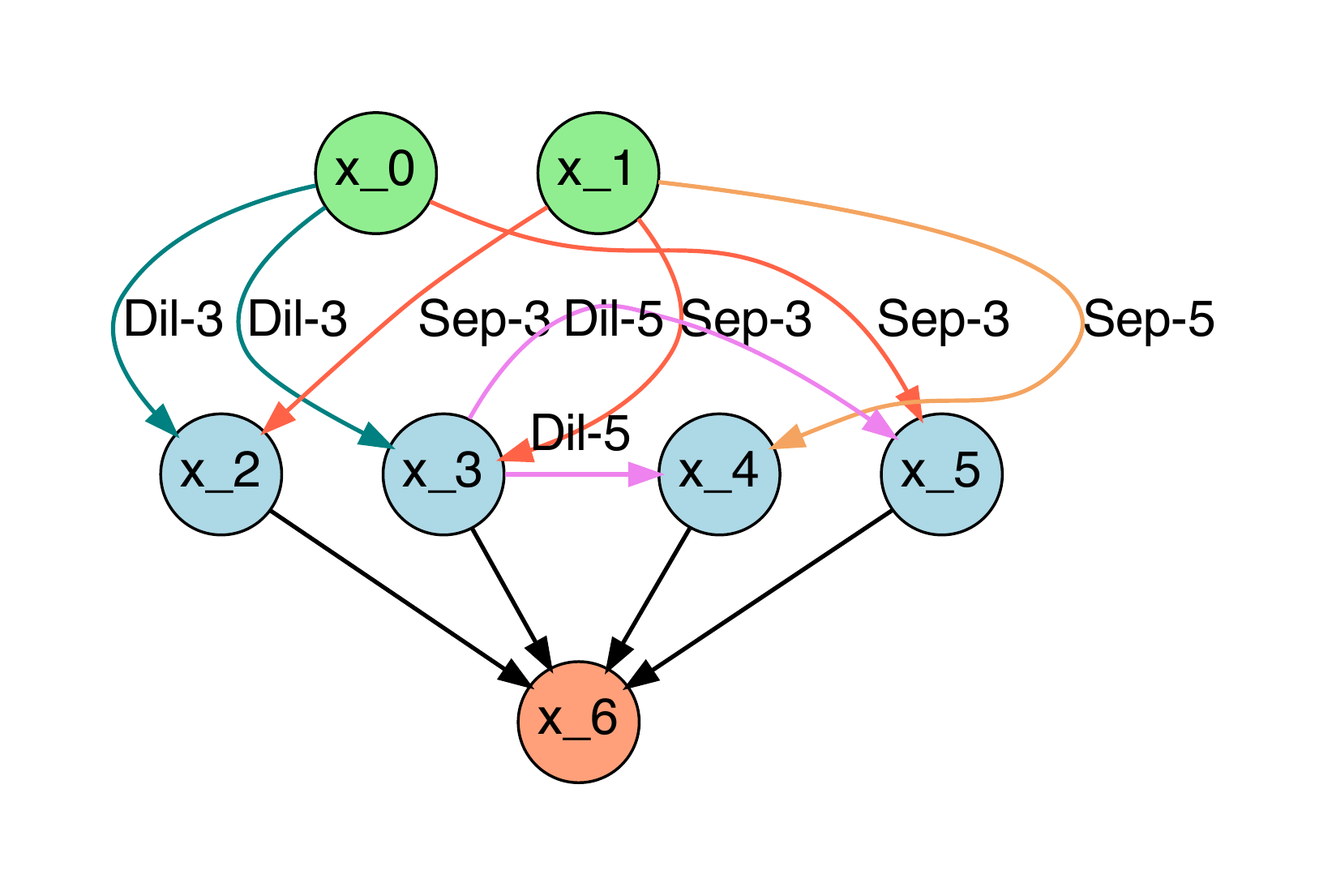} & \hspace{-4mm}\includegraphics[width=0.48\columnwidth]{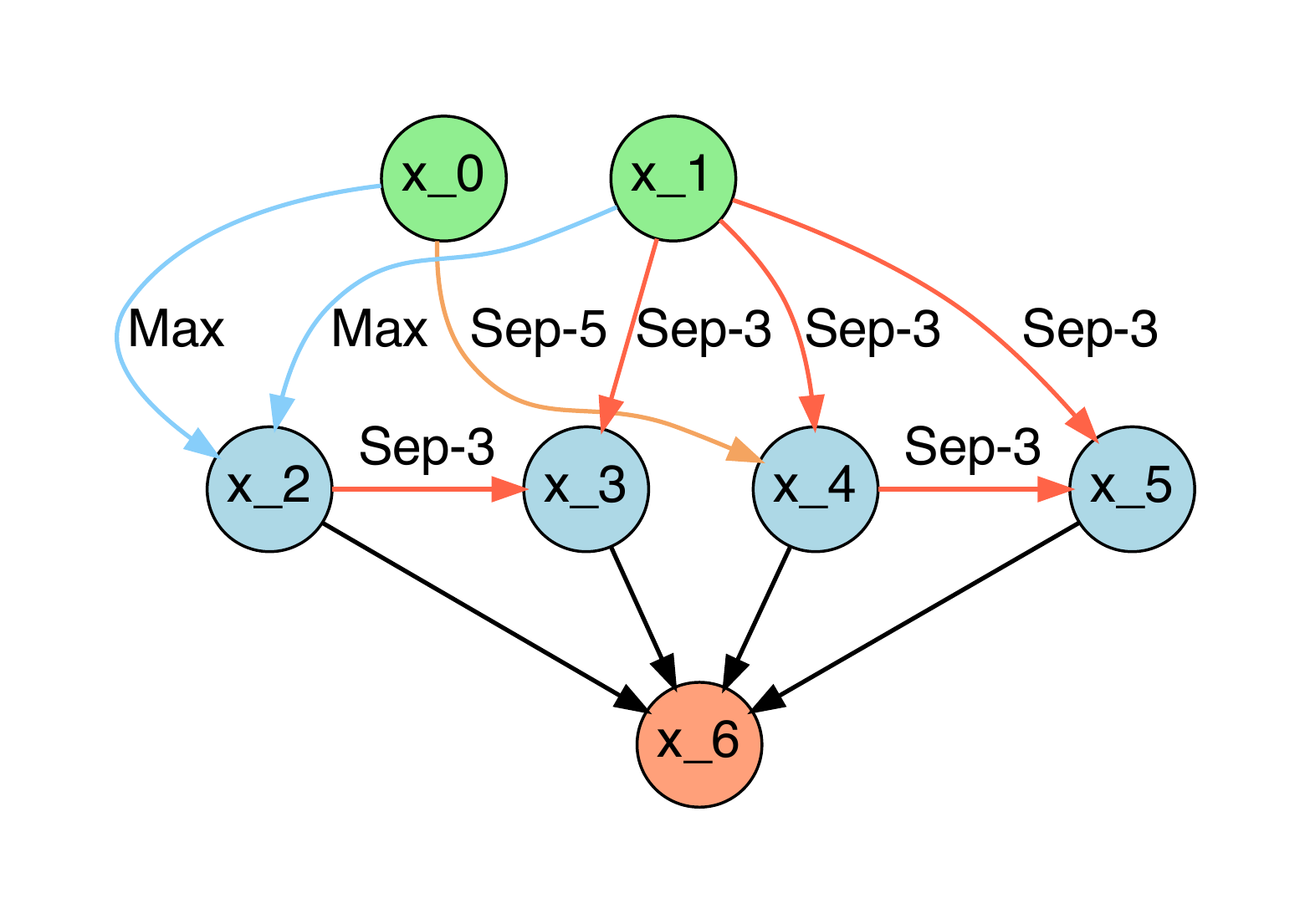} \\
    {(a) NASI-FIX (Normal)} & \hspace{-4mm}{(b) NASI-FIX (Reduction)} \\
    \includegraphics[width=0.51\columnwidth]{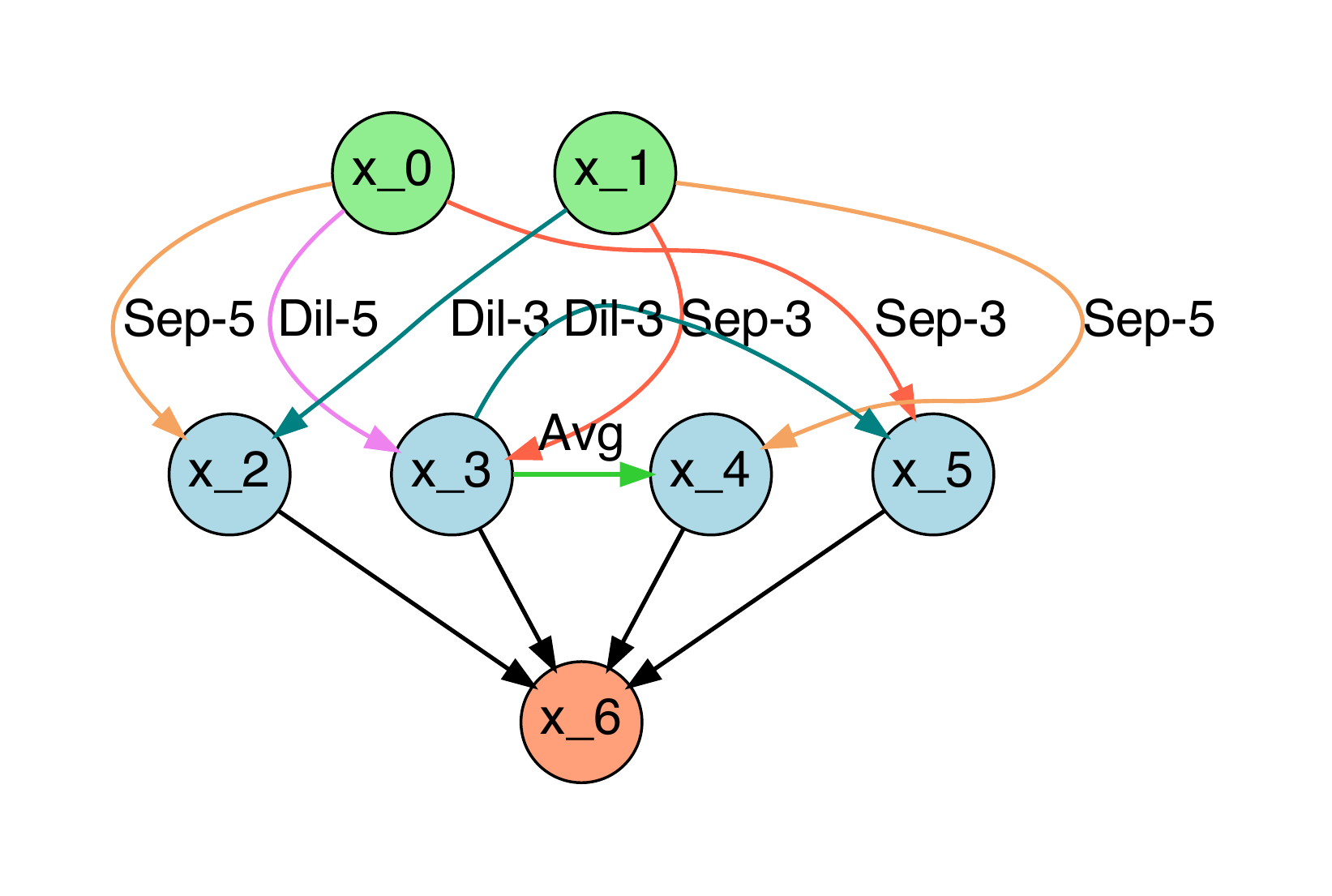} & \hspace{-4mm}\includegraphics[width=0.48\columnwidth]{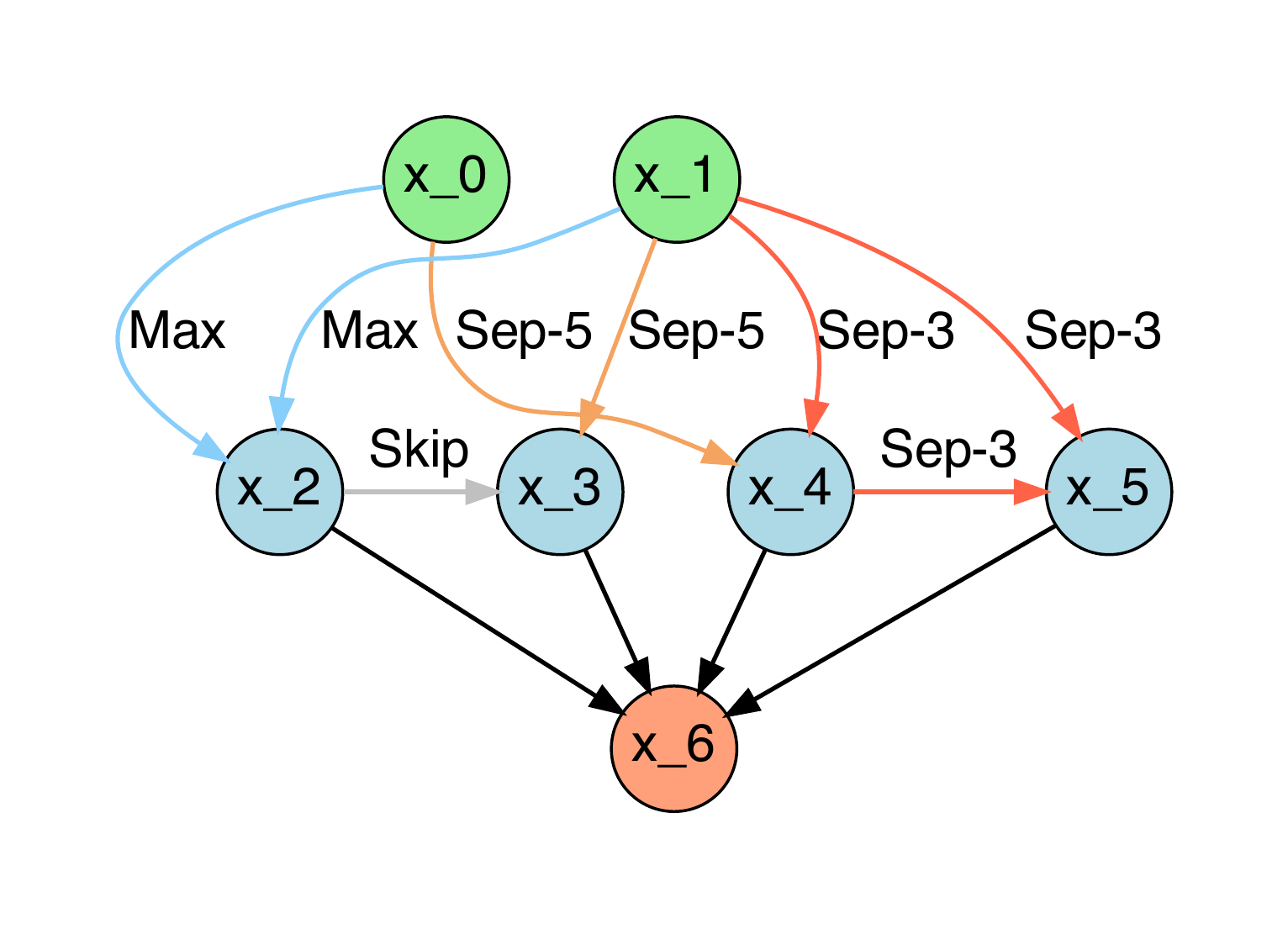} \\
    {(c) NASI-ADA (Normal)} & \hspace{-4mm}{(d) NASI-ADA (Reduction)}
\end{tabular}
\caption{The final selected normal and reduction cells of NASI-FIX and NASI-ADA in the reduced DARTS search space on CIFAR-10. Note that $x_0, x_1$ denote the input nodes, $x_2, x_3, x_4, x_5$ denote intermediate nodes and $x_6$ denotes the output node as introduced in Appendix~\ref{app_sec:search-space}.}
\label{app_fig:architectures}
\end{figure}

\subsection{Architectures Selected by NASI}\label{app_sec:architecture}
The final selected cells in the DARTS search space (i.e., NASI-FIX and NASI-ADA used in Sec.~\ref{sec:evaluation}) are illustrated in Figure~\ref{app_fig:architectures}, where different operations are denoted with different colors for clarification. Interestingly, according to the definitions and findings in~\citep{understand-nas}, these final selected cells by NASI are relatively deeper and shallower than the ones selected by other NAS algorithms and hence may achieve worse generalization performance. Nonetheless, in our experiments, the architectures constructed with these cells (i.e., NASI-FIX and NASI-ADA) achieve competitive or even better generalization performance than the ones selected by other NAS algorithms as shown in Table~\ref{tab:sota-cifar} and Table~\ref{tab:sota-imagenet}. A possible explanation is that our NASI algorithm provides a good trade-off between the complexity of architectures and the optimization in their model training, while other NAS algorithms implicitly prefer architectures with smaller complexity and faster convergence rate as revealed in~\citep{understand-nas}. Note that the final selected cells in NASI-FIX and NASI-ADA are of great similarity to each other, which hence implies that the adaptive determination and the fixed determination of constraint $\nu$ share similar effects on the selection of final architectures.

\subsection{Evaluation on ImageNet}\label{app_sec:imagenet}

\begin{table}[t]
\renewcommand\multirowsetup{\centering}
\caption{Performance comparison among SOTA image classifiers on ImageNet. Note that architectures followed by C10 are transferred from the CIFAR-10 dataset, while architectures followed by ImageNet are directly selected on ImageNet.}
\label{tab:sota-imagenet}
\centering
\begin{tabular}{lccccc}
\toprule
\multirow{2}{*}{\textbf{Architecture}} & \multicolumn{2}{c}{\textbf{Test Error (\%)}} &
\multirow{2}{1.0cm}{\textbf{Params (M)}} &
\multirow{2}{0.6cm}{\textbf{$+\times$ (M)}} & 
\multirow{2}{2cm}{\textbf{Search Cost  (GPU Days)}} \\
\cmidrule(l){2-3}
& \textbf{Top-1} & \textbf{Top-5} & & \\
\midrule 
Inception-v1 \citep{inception}  & 30.1 & 10.1 & 6.6 & 1448 & - \\
MobileNet \citep{mobilenet} & 29.4 & 10.5 & 4.2 & 569 & - \\
ShuffleNet 2$\times$(v2) \citep{shufflenetv2} & 25.1 & 7.6 & 7.4 & 591 & - \\
\midrule
NASNet-A \citep{nasnet} & 26.0 & 8.4 & 5.3 & 564 & 2000\\
AmoebaNet-A \citep{amoebanet} & 25.5 & 8.0 & 5.1 & 555 & 3150 \\
PNAS \citep{pnas} & 25.8 & 8.1 & 5.1 & 588 & 225 \\
MnasNet-92 \citep{mnasnet} & 25.2 & 8.0 & 4.4 & 388 & - \\
\midrule
DARTS \citep{darts} & 26.7 & 8.7 & 4.7 & 574 & 4.0 \\
SNAS (mild) \citep{snas} & 27.3 & 9.2 & 4.3 & 522 & 1.5\\
GDAS \citep{gdas} & 26.0 & 8.5 & 5.3 & 581 & 0.21\\
ProxylessNAS \citep{proxyless-nas} & 24.9 & 7.5 & 7.1 & 465 & 8.3 \\
P-DARTS \citep{p-darts} & 24.4 & 7.4 & 4.9 & 557 & 0.3 \\
DARTS- \citep{darts-} & 23.8 & 7.0 & 4.5 & 467 & 4.5 \\
SDARTS-ADV \citep{sdarts} & 25.2 & 7.8 & 5.4 & 594 & 1.3 \\
\midrule
TE-NAS (C10) \citep{te-nas} & 26.2 & 8.3 & 5.0 & - & 0.05 \\
TE-NAS (ImageNet) \citep{te-nas} & 24.5 & 7.5 & 5.4 & - & 0.17 \\
\midrule
NASI-FIX (C10) & 24.3 & 7.3 & 5.2 & 585 & \textbf{0.01} \\
NASI-FIX (ImageNet) & 24.4 & 7.4 & 5.5 & 615 & \textbf{0.01} \\
NASI-ADA (C10) & 25.0 & 7.8 & 4.9 & 559 & \textbf{0.01} \\
NASI-ADA (ImageNet) & 24.8 & 7.5 & 5.2 & 585 & \textbf{0.01} \\
\bottomrule
\end{tabular}
\end{table}
We also evaluate the performance of the final architectures selected by NASI on ImageNet and summarize the results in Table~\ref{tab:sota-imagenet}.
Notably, NASI-FIX and NASI-ADA outperform the expert-designed architecture ShuffleNet 2$\times$(v2), NAS-based architecture MnasNet-92 and DARTS by a large margin, and are even competitive with best-performing one-shot NAS algorithm DARTS-. Notably, while achieving better generalization performance than TE-NAS (ImageNet), NASI (C10) is even shown to be more efficient by directly transferring the architectures selected on CIFAR-10 to ImageNet based on its provable transferability in Sec.~\ref{sec:understanding}. Meanwhile, by directly searching on ImageNet, NASI-ADA (ImageNet) is able to achieve further improved performance over NASI-ADA (C10) while the performance of NASI-FIX (ImageNet) and NASI-FIX (C10) are quite similar.\footnote{In order to maintain the same initial channels between NASI-FIX (ImageNet) and NASI-FIX (C10), the multiply-add operations of NASI-FIX (ImageNet) has to be larger than 600M by a small margin.} Above all, these results on ImageNet further confirm the good transferability of the architectures selected by NASI to larger-scale datasets.

\subsection{Ablation Study and Analysis}\label{app_sec:ablation}

\paragraph{The impacts of architecture width.}
As our theoretically grounded performance estimation of neural architectures relies on an infinite width assumption (i.e., $n \rightarrow \infty$ in Proposition~\ref{prop:bound-mse}), we investigate the impacts of varying architecture width on the generalization performance of final selected architectures by our NASI. We adopt the same search settings in Sec.~\ref{sec:search} but with a varying architecture width (i.e., a varying number of initial channels) on CIFAR-10 for the three search spaces of NAS-Bench-1Shot1. Table~\ref{tab:width} shows results of the effectiveness of our NASI in NAS-Bench-1Shot1 with varying architecture widths $N$: NASI consistently selects well-performing architectures and a larger width ($N=32$) enjoys better search results. Hence, the infinite width assumption for NTK does not cause any empirical issues for our NASI.

\begin{table}\centering
\caption{Search with varying architecture widths $N$ in the three search spaces of NAS-Bench-1Shot1.}\label{tab:width}
\centering
\begin{tabular}{lcccccc}
\toprule
\textbf{Spaces} & $N=2$ & $N=4$ & $N=8$ & $N=16$ & $N=32$ \\
\midrule
S1 & 7.0$\pm$0.6 & 8.3$\pm$1.8 & 8.0$\pm$2.2 & 7.3$\pm$1.5 & 6.5$\pm$0.2 \\
S2 & 7.3$\pm$0.7 & 8.0$\pm$1.5 & 7.3$\pm$0.6 & 7.3$\pm$0.4 & 7.0$\pm$0.2 \\
S3 & 7.6$\pm$0.8 & 7.7$\pm$1.1 & 6.8$\pm$0.1 & 6.8$\pm$0.3 & 6.4$\pm$0.2 \\
\bottomrule
\end{tabular}
\end{table}

\paragraph{The effectiveness of NTK trace norm approximations.}
Since the trace norm of NTK is costly to evaluate, we have provided our approximation to it in Sec.~\ref{sec:approx}. In this section, we empirically validate the effectiveness of our NTK trace norm approximation. Specifically, we evaluate the Pearson correlation between our approximations (including the approximations using the sum of sample gradient norm in the first inequality of \eqref{eq:parallel-approx} and the approximations using mini-batch gradient norm in \eqref{eq:final-approx}) and the exact NTK trace norm under varying batch size in the three search spaces of NAS-Bench-1Shot1. The results are summarized in Table \ref{tab:approximation}. The results confirm that our approximation is reasonably good to estimate the exact NTK trace norm of different architectures by achieving a high Pearson correlation between our approximations and the exact NTK trace norm. Interestingly, a larger batch size of mini-batch gradient norm generally achieves a better approximation, and the sum of sample gradient norm achieves the best approximation, which can be explained by the possible approximation errors we introduced when deriving our \eqref{eq:parallel-approx} and \eqref{eq:final-approx}.

\begin{table}[t]
\centering
\caption{Pearson correlation between our NTK trace norm approximation and the exact NTK trace norm in the three search spaces of NAS-Bench-1Shot1.}\label{tab:approximation}
\centering
\begin{tabular}{lccccccc}
\toprule
\multirow{2}{*}{\textbf{Spaces}} & 
\multicolumn{6}{c}{\textbf{Mini-batch}} &
\multirow{2}{*}{\textbf{Sum}} \\
\cmidrule(l){2-7}
 & $b=4$ & $b=8$ & $b=16$ & $b=32$ & $b=64$ & $b=128$ & \\
\midrule
S1 & 0.74 & 0.80 & 0.84 & 0.83 & 0.85 & 0.86 & 0.88 \\
S2 & 0.82 & 0.87 & 0.90 & 0.89 & 0.91 & 0.91 & 0.92 \\
S3 & 0.66 & 0.74 & 0.81 & 0.79 & 0.82 & 0.84 & 0.87 \\
\bottomrule
\end{tabular}
\end{table}

\paragraph{The impacts of batch size.}
Since we only adopt a mini-batch to approximate the NTK trace norm shown in Sec.~\ref{sec:approx}, we further examine the impacts of varying batch size on the search results in the three search spaces of NAS-Bench-1Shot1 in this section. Table \ref{tab:batch_size} summarizes the search results. Interestingly, the results show that our approximation under varying batch sizes achieves comparable search results, further confirming the effectiveness of our approximations in select well-performing architectures.

\begin{table}[t]
\centering
\caption{Search with varying batch sizes $b$ in the three search spaces of NAS-Bench-1Shot1.}\label{tab:batch_size}
\centering
\begin{tabular}{lcccccc}
\toprule
\textbf{Spaces} & $b=8$ & $b=16$ & $b=32$ & $b=64$ & $b=128$ \\
\midrule
S1 & 6.8$\pm$0.3 & 6.7$\pm$0.2 & 6.7$\pm$0.1 & 6.0$\pm$0.3 & 7.0$\pm$0.3 \\
S2 & 6.9$\pm$0.2 & 7.3$\pm$0.4 & 7.1$\pm$0.2 & 7.2$\pm$0.3 & 6.8$\pm$0.2 \\
S3 & 7.0$\pm$0.3 & 6.6$\pm$0.2 & 6.7$\pm$0.2 & 6.6$\pm$0.1 & 7.1$\pm$0.2 \\
\bottomrule
\end{tabular}
\end{table}

\paragraph{The impacts of constraint $\nu$ and penalty coefficient $\mu$.}
Based on the analysis in Sec.~\ref{sec:optimization} and the results in Appendix~\ref{app_sec:trade-off}, the choice of constraint $\nu$ and penalty coefficient $\mu$ is thus non-trivial to select best-performing architectures since they trade off the complexity of final selected architectures and the optimization in their model training. In this section, we demonstrate their impacts on the generalization performance of the finally selected architectures and also the effectiveness of our fixed determination on the constraint $\nu$ in detail. Notably, we adopt the same settings (including the search and training settings) as those in Appendix~\ref{app_sec:trade-off} on the DARTS search space, where $\nu_0$ in the fixed determination method introduced in Appendix~\ref{app_sec:determine} is modified manually for the comparison.

Figure~\ref{app_fig:ablation} summarizes their impacts into two parts: the joint impacts in Figure~\ref{app_fig:ablation}(a) and the individual impacts in Figure~\ref{app_fig:ablation}(b). Notably, in both plots, with the increasing of $\nu$ or $\mu$, the test error of the final selected architecture by NASI gradually decreases to a minimal and then increase steadily, hence further validating the existence of the trade-off introduced by $\nu$ and $\mu$. Moreover, the final selected architectures with $\nu{=}100$ significantly outperform other selected architectures.
This result thus supports the effectiveness of our fixed determination method for $\nu$ in NASI. Interestingly, Figure~\ref{app_fig:ablation}(b) reveals a less obvious decreasing trend of test error for $\mu$ compared with $\nu$, which hence further implies the greater importance of constraint $\nu$ than penalty coefficient $\mu$ in terms of the generalization performance of the final selected architectures. Therefore, in our experiments, we conventionally set penalty coefficient $\mu{=}1$ for small-complexity search spaces and $\mu{=}2$ for large-complexity search spaces as introduced in Appendix~\ref{app_sec:determine}.

\begin{figure}
\centering
\begin{tabular}{cc}
    \hspace{-5mm}\includegraphics[width=0.45\columnwidth]{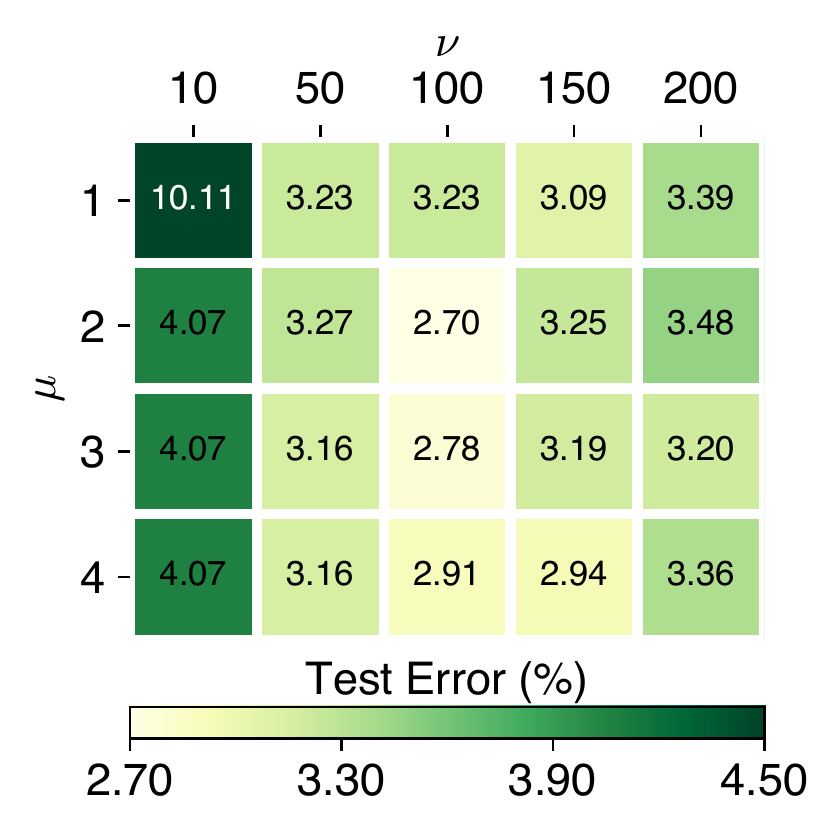} & \hspace{5mm}\includegraphics[width=0.36\columnwidth]{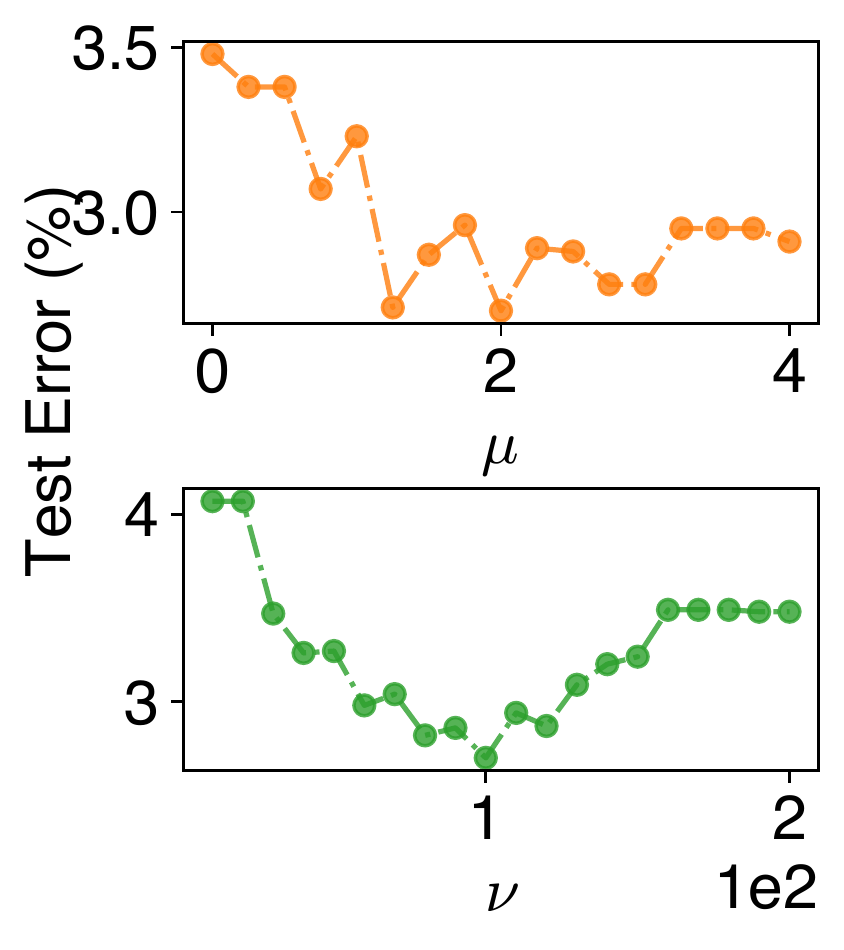} \\
    \hspace{-5mm}{(a) Joint impacts} & \hspace{5mm}{(b) Individual impacts}
\end{tabular}
\caption{The impacts of constraint $\nu$ and penalty coefficient $\mu$ on the generalization performance of the final selected architectures by NASI: (a) their joint impacts, and (b) their individual impacts.}
\label{app_fig:ablation}
\end{figure}

\end{appendices}

\end{document}